%
%
%

\documentclass[graybox]{s-svmult}


\usepackage{mathptmx}       
\usepackage{helvet}         
\usepackage{courier}        
\usepackage{type1cm}        
%
\usepackage{makeidx}         
\usepackage{graphicx}        
\usepackage{multicol}        
\usepackage[bottom]{footmisc}

\usepackage{booktabs}  
\usepackage{url}
\usepackage{natbib,bibentry}
\usepackage{amsfonts,amssymb,amsmath,bm}
\usepackage{graphics}
\graphicspath{{./Graphics/}}

\usepackage{pgf,pgfplots,pgfplotstable,tikz,tkz-fct}
\usepackage{hyperref}
\hypersetup{
  colorlinks,
  citecolor=blue,
  filecolor=blue,
  linkcolor=blue,
  urlcolor=blue
}

\usepackage{bibunits}

\DeclareMathOperator{\SD}{SD}
\DeclareMathOperator{\SE}{SE}
\DeclareMathOperator{\Var}{Var}
\DeclareMathOperator{\Cov}{Cov}

\DeclareMathOperator{\MEAN}{E}
\DeclareMathOperator{\bias}{bias}

\newcommand{\R}{\mathfrak{R}}
\newcommand{\RMS}{{\mathrm{RMS}}}
\newcommand{\MSE}{{\mathrm{MSE}}}
\newcommand{\RISK}{{\mathrm{R}}}
\newcommand{\Err}{{\mathrm{Err}}}
\newcommand{\TPF}{{\mathrm{TPF}}}
\newcommand{\FNF}{{\mathrm{FNF}}}
\newcommand{\FPF}{{\mathrm{FPF}}}
\newcommand{\AUC}{{\mathrm{AUC}}}

\DeclareSymbolFont{letters}{OML}{cmm}{m}{it}

\makeindex             

\usepackage[left=0.65in, right=0.65in, top=0.7in, bottom=0.65in]{geometry}

\begin{document}

\newcommand{\AUTHOR}{Waleed A. Yousef}%
\newcommand{\INSTITUTE}{Waleed A. Yousef\at ECE Dep. ISOT Laboratory, University of Victoria~\email{wyousef@uvic.ca}; and\\CS Dep.,
  HCILAB, Helwan University,~\email{wyousef@fci.helwan.edu}}

\frontmatter

\tableofcontents

\mainmatter

\begin{bibunit}[s-spbasic]
  \title*{Machine Learning Assessment: implications to cybersecurity}%
\author{\AUTHOR}%
\institute{\INSTITUTE}%
\maketitle%
\label{ch:MLassessment}

\abstract{%
  After discussing the construction of machine learning (ML) algorithms in the previous chapter, this chapter is dedicated to their
  assessment and performance estimation, with an emphasis on classification assessment, a topic that is equally important specially in
  the context of cyberphysical security design. The literature is full of nonparametric methods to estimate a statistic from just one
  available dataset through resampling techniques, e.g., jackknife, bootstrap and cross validation (CV). Special statistics of great
  interest are the error rate and the area under the ROC curve (AUC) of a classification rule. The importance of these resampling
  methods stems from the fact that they require no knowledge about the probability distribution of the data or the construction details
  of the ML algorithm. This chapter provides a concise review of this literature to establish a coherent theoretical framework for
  these methods that can estimate both the error rate (a one-sample statistic) and the AUC (a two-sample statistic). The resampling
  methods are usually computationally expensive, because they rely on repeating the training and testing of a ML algorithm after each
  resampling iteration. Therefore, the practical applicability of some of these methods may be limited to the traditional ML algorithms
  rather than the very computationally demanding approaches of the recent deep neural networks (DNN). In the field of cyberphysical
  security, many applications generate structured (tabular) data, which can be fed to all traditional ML approaches. This is in
  contrast to the DNN approaches, which favor unstructured data, e.g., images, text, voice, etc.; hence, the relevance of this chapter
  to this field.%
}


  \section{Introduction}\label{subsec:mylabel2}
\subsection{Motivation}\label{sec:motivation}
Consider a ML problem, where some models have been trained on a given dataset. It is then required to know their performances, in terms
of any performance measure, on the population of testers. This is not only for the sake of assessing each of them, but also to be able
to select the best model among them. These different models could even represent different instances of the same ML algorithm, with
different values of parameters (e.g., a KNN with different values of $K$), and it is required to choose the best value for the current
problem. The performance on the population of testers is called the true performance, because this is the performance on the whole
population, not on a subset of it.

\bigskip

If the underlying probability distribution of the testers is known, e.g., from a priori knowledge about the nature of the problem, the
true performance can be calculated mathematically. One of the first attempts in this direction was~\cite{Fukunaga1990Introduction}, where
he assumed the data follows a multinormal distribution, to find a closed-form expression of the error rate of a binary
classification rule. An alternative to mathematical calculations is simulating a very large dataset, from the assumed distribution,
from which a very accurate estimation of the true performance can be obtained.

The early work of Fukunuga was inspiring, from the theoretical point of view, for the early community of pattern recognition and
machine learning to understand important theoretical properties and concepts. However, for real-life applications it is very unusual
that the assumption of multinormality, or any other assumption, hold. In these situations, which are called nonparametric, or
distribution-free, it is impossible to derive either the true performance in closed form, or estimate it using a very large simulated
dataset. In such situations, the true performance must be estimated from a single testing dataset (testers). The way we obtain such a
testing dataset defines two major paradigms, discussed next.

\bigskip

In Paradigm I, we only have one dataset $\mathbf{t}$, usually called the design or construction dataset, from which we have to make up
a training dataset $\mathbf{tr}$ and a testing dataset $\mathbf{ts}$, such that $\mathbf{t}=\mathbf{tr}\cup\mathbf{ts}$. Otherwise,
training and testing on the same dataset $\mathbf{t}$ would provide a very optimistic estimate of the performance measure. This
splitting is performed iteratively using one of the resampling techniques, e.g., jackknife, bootstrap, or cross validation. In each
resampling iteration we get a different pair of training and testing datasets, on which the algorithm will be trained and tested,
respectively. The results from these different iterations will be compiled together, as defined by the resampling method, to provide a
single estimate of the performance measure. It is obvious that the performance estimation obtained from any of these methods will vary
with varying the design dataset $\mathbf{t}$. This chapter is dedicated to reviewing this paradigm, its different estimators, and the
variance estimation of these estimators.

It is worth mentioning that fatal fallacies are committed by practitioners when using this paradigm. For example, a
very common mistake is using the whole dataset $\mathbf{t}$ to learn some statistical properties of the different classes of the
classification problem, mistakenly naming this a data preprocessing step, using these properties to construct a classifier, then
excluding this step from the resampling mechanism afterwards. Although the correct way of performing preprocessing is explained in
textbooks~\citep[see, e.g.,][Sec. 7.10.2]{Hastie2009ElemStat}, we still see this mistake in several occasions in both academia and
industry.

\bigskip

In Paradigm II, it is required, or even mandated (e.g., in several public-policy-making or regulatory settings), to maintain what might
be called the traditional data hygiene of two independent datasets: the design dataset ${\rm {\bf t}}$, and a final testing dataset
${\rm {\bf TS}}$, which is a sequestered testing dataset that has never been available to the design procedure, but for just onetime
final testing. Assessing a ML algorithm from independent testing dataset is as simple as applying the estimators of the performance
measure of interest (Sec.~\ref{sec:notation}) on the testing dataset. However, the estimator will then have two sources of
variability, the design and the testing datasets. The mathematical details of this paradigm and the estimation of this variance are
discussed in~\cite{Yousef2006AssessClass,Chen2012ClassVar}, and not reviewed in our present chapter.

Although it may seem very safe to use this testing paradigm, some practitioners abuse it as well. One possible common mistake is that
they test several models on this sequestered testing set, then they analyze the relative estimated performances. Accordingly, these
models are redesigned to improve their performance on the testing set! Worse than this is keeping iterating this processes several
times, which indeed turns the independent sequestered testing dataset to be part of the training dataset, indirectly through this human
mental parsing of the results, which acts as a feedback that guides the redesign process.

\bigskip

Nowadays, it is almost the default in the field of ML to leverage both paradigms in the task of model assessment and selection. The
available dataset is initially split into two datasets:%
\begin{enumerate}
\item the design dataset $\mathbf{t}$, from which the ML algorithm is designed. This is conducted via one of the resampling methods of
  paradigm I explained above. Usually, several algorithms are used, and several parameters' values are examined for each algorithm.
  Then, the model with the best performance is chosen.

\item the sequestered testing dataset $\mathbf{TS}$, on which the final chosen model from paradigm I is assessed once and only once,
  without redesign. This is the final estimation of the performance measure that should be reported, along with the estimation of its
  variance.%
\end{enumerate}
It is worth mentioning that, there is a convention in the field to call the dataset $\mathbf{ts}$ that is split from the design dataset
$\mathbf{t}$ during the resampling process, a validation dataset rather than a testing dataset, to reserve the word testing to the
final testing datset $\mathbf{TS}$ of paradigm II. However, in some applications, the converse is adopted; i.e., $\mathbf{ts}$ is
called the a testing dataset and $\mathbf{TS}$ is called a validation dataset. To avoid ambiguity, any notation and expression should
be defined clearly within any context.

\bigskip

What is introduced above is valid for any ML problem, whether it is regression or classification, and for any performance measure,
whether it is the error rate $\Err$, AUC, or any other. However, we emphasize below two very important issues.

(1) The true performance, which we discussed its estimation in this introduction so far, is itself a random variable whose randomness
arises from the randomness of the training dataset, as was explained in the previous chapter. Have we changed the training
dataset, the true performance would change. For example, and without loss of generality (WLOG) but for the sake of illustration,
suppose the whole design dataset $\mathbf{t}$ is used as a training dataset $\mathbf{tr}$ and we are interested in the AUC as a
performance measure. Then, as was explained in the previous chapter, we should be interested in the following:
\begin{enumerate}
\item $\AUC_{{\rm {\bf t}}} $: the true performance conditional on a particular training dataset ${\rm {\bf t}}$ of a specified size $n$.
  
\item $\MEAN_{{\rm {\bf t}}} \AUC_{{\rm {\bf t}}} $: the expectation of true performance over the population of training datasets of the
  same size $n$.
  
\item $\Var_{\rm {\bf t}}\AUC_{{\rm {\bf t}}}$: the variance of the true performance over the population of training datasets of the
  same size $n$.
\end{enumerate}

(2) Regarding the meaning and utility of the performance measure, we emphasize the importance of the ROC curve and its AUC as a summary
measure~\citep{Hanley1982TheMeaning,Hanley1989ROCMeth,Bradley1997TheUseOfTheAreaUnder}, where the former is a manifestation of the
trade-off between the two types of error of any binary classification rule. We always advocate for the use of the ROC or its AUC since
they are prevalence independent; i.e., they do not depend on a particular chosen threshold, class prior probability, or
misclassification costs. Adopting a performance measure that is prevalence dependent, e.g., the overall accuracy or its many different
versions, can provide a misleading measure of the classification power of the classification algorithm, especially in classification
problems that involve, for instance, unbalanced data (different class size). Therefore, the present chapter assumes familiarity with
the ROC and its AUC, at the level provided in the previous chapter. However, for the sake of completeness, all notations are tersely
summarized in the following subsection.

\subsection{Notation}\label{sec:notation}
Consider the binary classification problem, where a classification rule $\eta$ gives a score of $h(x)$ for the predictor $x$, and
classifies it to one of the two classes by comparing this score $h(x)$ to a chosen threshold $th$. The observation $x$ belongs to one
of the two classes with distributions $F_i$, $i=1,2$. The two error components of this rule ($e_1$, or the false negative fraction
(FNF), and $e_2$ or the false positive fraction (FPF)), along with the risk, are given as follows:%
\begin{subequations}\label{Eqe1e2}
  \begin{align}
    \FNF &= e_{1}  =\int_{-\infty}^{th}{f_{h}}\left(  {h(x)|\omega_{1}}\right){dh(x)},\\
    \FPF &= e_{2}  =\int_{th}^{\infty}{f_{h}}\left(  {h(x)|\omega_{2}}\right)  {dh(x)},\\
    \RISK&=c_{12}P_{1}e_{1}+c_{21}P_{2}e_{2}.\label{eq44}
  \end{align}
\end{subequations}
The cost $c_{ij},\ i,j=1,2$ is the cost of classifying an observation as belonging to class $j$ whereas it belongs to class $i$;
$c_{ii}= 0$, which means there is no cost for correct classification; and $P_i$ is the prior probability of each class, $i=1,2$. The
risk~\eqref{eq44} is called the ``error rate'' $\Err$, or probability of misclassification (PMC), when putting $c_{12}=c_{21}=1$, which
is denoted by the 0-1 cost, or loss.

The receiver operating characteristics (ROC) curve is a plot of the true positive fraction (TPF), which is $1-\FNF$, versus the FPF.
Then the area under the curve (AUC) is given by:%
\begin{subequations}\label{eq47eq:PopulationManWhit}
  \begin{align}
    \AUC&=\int_{0}^{1}{\TPF~d(\FPF)}.\label{eq47}\\
        &=\Pr\bigl[ h(x)|\omega_{2} < h(x)|\omega_{1}\bigr],\label{eq:PopulationManWhit}
  \end{align}
\end{subequations}
which expresses how the classifier scores for class $\omega_1$ are stochastically larger than those of class $\omega_2$.

\bigskip

If the distributions $F_1$ and $F_2$ are not known, a setup that is called nonparametric or distribution-free, any performance measure
can be estimated only numerically from a given dataset, called the testing dataset. This is regardless of the testing paradigm, i.e.,
whether this testing dataset is obtained by simulation, resampling, or sequestering. This is done by assigning equal probability mass
for each observation:%
\begin{equation}
  \hat{F}:\text{mass}~\frac{1}{n}~\text{on}\,~t_{i},~i=1,\ldots,n,\label{eq52}
\end{equation}
where $n$ is the size of the testing dataset. Lemma~\ref{lem:perf-estim-from} shows that this is the maximum likelihood estimator (MLE)
of the distribution $F$.

In this case the performance measures~\eqref{Eqe1e2} can be obtained as follows.%
\begin{subequations}\label{eq55}
  \begin{align}
    \widehat{\FNF} &=\widehat{e_{1}} = \frac{1}{n}\sum\limits_{i=1}^{n}I_{h(x_{i}|\omega_{1})<th}\\
    \widehat{\FPF} &= \widehat{e_{2}} = \frac{1}{n}\sum\limits_{i=1}^{n}I_{h(x_{i}|\omega_{2})>th}\\
    \widehat{\RISK(\eta)}  & =\frac{1}{n}\left(  {c_{12}\,\widehat{e_{1}}\,n_{1}+c_{21}\,\widehat{e_{2}}\,n_{2}}\right).
  \end{align}
\end{subequations}
The indicator function $I_{cond}$ equals 1 or 0 when the Boolean expression $cond$ is true or false, respectively. The values $n_{1}$
and $n_{2}$ are the number of observations in the two classes respectively, and $\widehat{P_{1}}$ and $\widehat{P_{2}}$ are the
estimated a priori probabilities for each class.

\bigskip

As the the two components $\TPF$ and $\FPF$ defined a single operating point on the ROC, the two components $\widehat{\TPF}
(=1-\widehat{\FNF})$ and $\widehat{\FPF}$ give one point on the empirical (estimated) ROC curve. To draw the complete curve in the
nonparametric situation, the classifier's sore is calculated for each point of the available dataset. Then all possible thresholds are
considered in turn, i.e., the threshold values between every two successive scores. At each threshold value a point on the ROC curve is
calculated. Then the AUC~\eqref{eq47} can be estimated from the empirical ROC curve using the trapezoidal rule:%
\begin{equation}
  \widehat{\AUC}=\frac{1}{2}\sum\limits_{i=2}^{n_{th}}{\left(  {\FNF_{i}-\FNF_{i-1}}\right)  (\TPF_{i}+\TPF_{i-1})},\label{eq57-01}%
\end{equation}
where $n_{th}$ is the number of threshold values taken over the dataset. By plotting the empirical ROC curve, it is easy to see
that~\eqref{eq57-01} is the same as the Mann-Whitney statistic---which is another form of the Wilcoxon rank-sum test
\citep[Ch.4]{Hajek1999TheoryOfRank}---defined by:%
\begin{subequations}\label{eq58}
  \begin{gather}
    \widehat{\AUC}=\frac{1}{n_{1}n_{2}}\sum\limits_{j=1}^{n_{2}}{\sum\limits_{i=1}^{n_{1}}{\psi\left(  {h\left(  {x_{i}|\omega_{1}}\right),h\left(  {x_{j}|\omega_{2}}\right)  }\right)  }},\\
    \psi(a,b)=\left\{%
      \begin{array}[c]{ccc}
        1 &  & a>b\\
        1/2 &  & a=b\\
        0 &  & a<b
      \end{array}\right..
  \end{gather}
\end{subequations}
It is interesting, as well, to know from the theory of $U$-statistics~\citep{Randles1979IntroductionTo} that the estimator~\eqref{eq58}
is the uniform minimum variance unbiased estimator (UMVUE) for the probability~\eqref{eq:PopulationManWhit} under the
distribution~\eqref{eq52}.

\bigskip

All the estimators given above have the nice property of converging to their corresponding population definitions,~\eqref{Eqe1e2}
and~\eqref{eq47eq:PopulationManWhit}, as the size of the testing set goes to infinity. It is worth mentioning that each of the error
estimators $\hat{e}_1$ and $\hat{e}_2$ in~\eqref{eq55} is called a one-sample statistic, because its kernel $I_{(\cdot)}$ requires only one
observation from either distributions. However, the AUC estimator in~\eqref{eq58} is a two-sample statistic since its kernel
$\psi(\cdot,\cdot)$ requires two observations, one from each distribution. This is a fundamental difference between both estimators
(statistics) which will be treated and explained carefully in the present chapter.

\subsection{Roadmap}\label{sec:manuscript-roadmap}
The rest of this chapter is organized as follows. Sec.~\ref{sec:nonp-meth-bias} paves the road to the chapter by reviewing the
nonparametric estimators for estimating the mean and variance of one-sample statistics, including the preliminaries of bootstraps and
influence function. This section is a very concise review mainly of the work done in
\cite{Hampel1974TheInfluence},~\cite{Efron1993AnIntroduction}, and~\cite{Huber1996RobustStatistical}. Sec.~\ref{subsec:estimating}
switches gears and reviews the nonparametric estimators that estimate the mean and variance of a special kind of statistics, i.e., the
error rate of classification rules. This section is a concise review of the work done mainly in~\cite{Efron1983EstimatingTheError}
and~\cite{Efron1997ImprovementsOnCross}. Sec.~\ref{sec:NonParamInfAUC} explains how the nonparametric estimators that estimate the
error rate, a one-sample statistic, can be extended to estimate the AUC, a two-sample statistic. It does so by providing theoretical
parallelism between the two sets of estimators and showing that the extension is rigorous and not just an ad hoc application.
Sec.~\ref{sec:conclusion} concludes the chapter and provides a discussion and an advice for practitioners.


  \section{Nonparametric Methods for Estimating the Bias and the Variance of a Statistic}\label{sec:nonp-meth-bias}
Consider a statistic $s$ that is a function of a dataset $\mathbf{x}:\{x_{i},\,i=1,\ldots,n\}$, where $x_{i}\overset{i.i.d}{\sim}F$.
The statistic $s$ is now a random variable and its variability comes from the variability of $x_{i}$. Suppose that this statistic is
used to estimate a real-valued parameter $\theta=f\left( F\right) $. Then $\hat{\theta }=s\left( \mathbf{x}\right) $ has expected value
$\MEAN {s\left( \mathbf{x}\right) }$ and variance $\Var s\left( \mathbf{x}\right)$. The mean
squared error (MSE) of the estimator $\hat{\theta}$ is defined as:%
\begin{equation}
  \MSE(\hat{\theta})=\MEAN\left[  {\hat{\theta}-\theta}\right]  ^{2}.\label{eq59}%
\end{equation}
The root of the mean squared error (RMS) has the same units and is on the same scale of the original variable $\theta$, and hence has more
intuitive value. The bias of the estimator $\hat{\theta}=s\left( \mathbf{x}\right) $ is defined by the difference between the true
value of the parameter and the expectation of the estimator, i.e.,%
\begin{equation}
  \bias_{F}\left(\hat{\theta}\right)  =\MEAN_{F}  s\left(\mathbf{x}\right)  - \theta.\label{eq60}
\end{equation}
Then, it is known that, the MSE in (\ref{eq59}) can be decomposed to:%
\begin{equation}
  \MSE(\hat{\theta})=\bias_{F}^{2}\left(\hat{\theta}\right) +\Var_F \hat{\theta}.\label{eq61}%
\end{equation}
A critical question is whether the bias and variance of the statistic $s$ in (\ref{eq61}) may be estimated from the available dataset?

\subsection{Bootstrap Estimate}\label{subsubsec:bootstrap}
\begin{figure}[t]\centering
  \includegraphics[width=\textwidth]{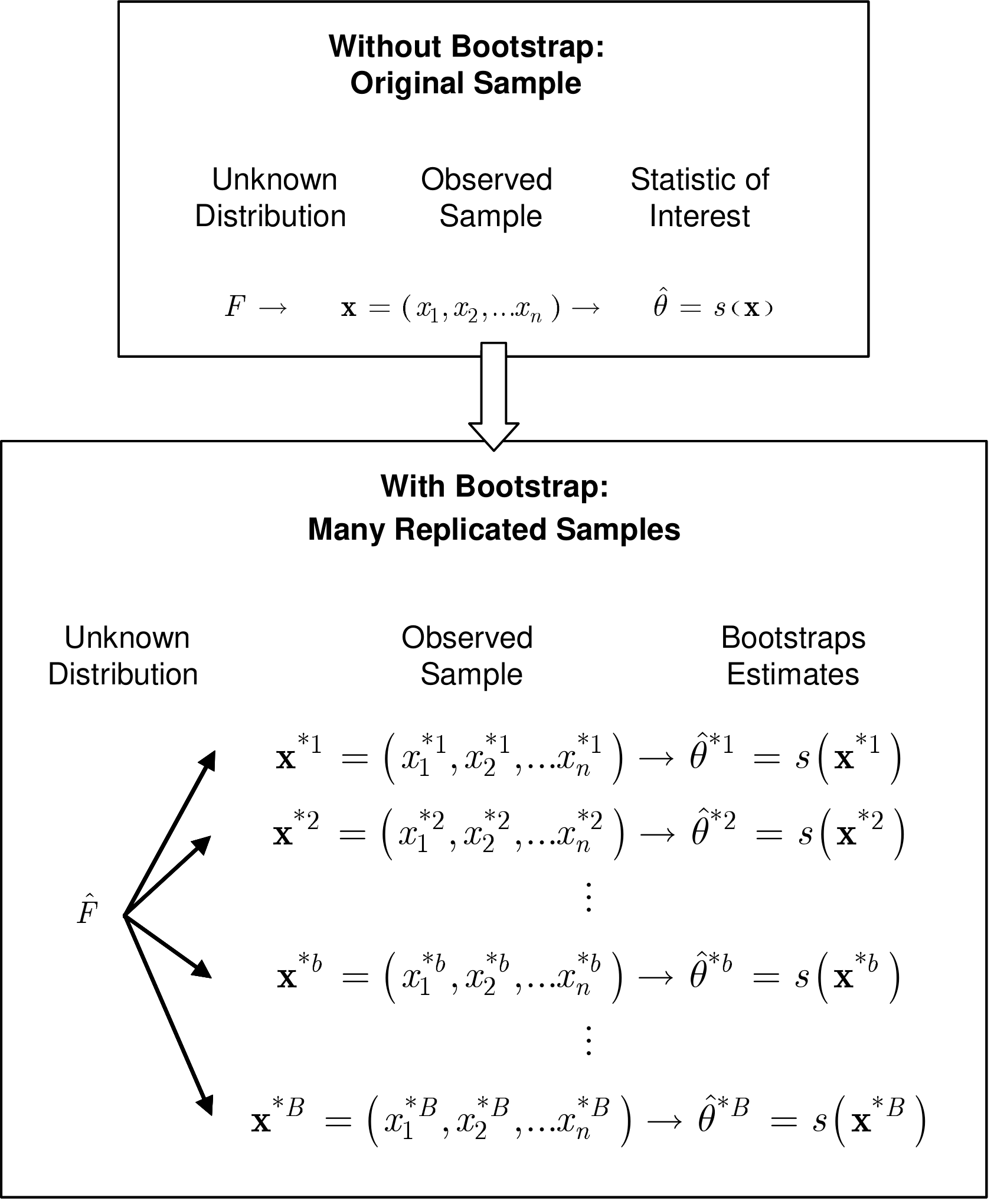}%
  \caption{Bootstrap mechanism: $B$ bootstrap replicates are withdrawn (by sampling and replacement) from the original sample. From
    each replicate the statistic is calculated. (The idea behind this figure first appeared in~\cite[Figure 6.1, pp.
    48]{Efron1993AnIntroduction})}\label{fig5}
\end{figure}
The bootstrap was introduced by~\cite{Efron1979BootstrapMethods} to estimate the standard error of a statistic. The bootstrap mechanism
is implemented by treating the current dataset $\mathbf{x}$ as a representation for the population distribution $F$; i.e.,
approximating the distribution $F$ by the MLE defined in (\ref{eq52}). Then $B$ bootstrap samples are drawn from that empirical
distribution. Each bootstrap replicate is of size $n$, the same size as $\mathbf{x}$, and is obtained by sampling with replacement.
Then in a bootstrap replicate some case $x_{i}$, in general, will appear more than once at the expense of another $x_{j}$ that will not
appear. The original dataset will be treated now as the population, and the replicates will be treated as samples from the population.
This situation is illustrated in~\figurename~\ref{fig5}. Each of these bootstrap replicates is denoted by $\mathbf{x}^{\ast b},\
b=1,\ldots,B$, and the corresponding bootstrap replications of the statistics $\hat{\theta}=s(\mathbf{x})$ itself are given by:%
\begin{equation}
  \hat{\theta}^{\ast b}=s(\mathbf{x}^{\ast b}),\quad b=1,\ldots,B,
\end{equation}
The bootstrap estimate of bias and standard error are defined by:%
\begin{gather}
  \bias_{B}(\hat{\theta})=\hat{\theta}^{\ast}-\hat{\theta},\label{eq62}\\
  \widehat{\SE}_B=\left[ \frac{1}{(B-1)} {\sum\limits_{b=1}^{B}{\left[  {\hat{\theta}^{\ast b}-\hat{\theta}^{\ast}}\right]  ^{2}}}\right]
  ^{1/2},\label{eq63}\\
  \hat{\theta}^{\ast}=\frac{1}{B}\sum\limits_{b=1}^{B}{\hat{\theta}^{\ast b}}.
\end{gather}
Either in estimating the bias or the standard error, the larger the number of bootstraps $B$ the closer the estimate to the asymptotic
value, i.e.,%
\begin{equation}
  \lim_{B\rightarrow\infty}\widehat{\SE}_B(\hat{\theta}^{\ast})=\SE_{\hat{F}}(\hat{\theta}^{\ast}).\label{eq64}
\end{equation}
For more details and some examples the reader is referred to~\citet[Ch. 6, 7, and 10]{Efron1993AnIntroduction}.

\subsection{Jackknife Estimate}\label{subsubsec:jackknife}
Instead of replicating from the original dataset, a new set $\mathbf{x} _{(i)}$ is created
by removing the case $x_{i}$ from the dataset. Then the jackknife samples are defined by:%
\begin{equation}
  \mathbf{x}_{\left(  i\right)  }=(x_{1},\ldots,x_{i-1},x_{i+1},\ldots ,x_{n}),\quad i=1,\ldots,n,\label{eq65}
\end{equation}
and the $n$-jackknife replications of the statistic $\hat{\theta}$ are:%
\begin{equation}
  \hat{\theta}_{(i)}=s(\mathbf{x}_{\left(  i\right)  }),\quad i=1,\ldots,n.\label{eq66}
\end{equation}
The jackknife estimates of bias and standard error are defined by:%
\begin{gather}
  \widehat{\bias}_{J}=(n-1)(\hat{\theta}^J-\hat{\theta}\label{eq67}),\\
  \widehat{\SE}_J=\left[  {\frac{n-1}{n}\sum\limits_{i=1}^{n}{(\hat{\theta}_{\left(  i\right)  }-\hat{\theta}^J)^{2}}}\right]  ^{1/2},\\
  \hat{\theta}^J=\frac{1}{n}\sum\limits_{i=1}^{n}{\hat{\theta}_{\left(i\right) }}.
\end{gather}
For motivation behind the factors $(n-1)$ and $(n-1)/n$ in (\ref{eq67}) see~\cite[Ch. 11]{Efron1993AnIntroduction}. The jackknife
estimate of variance is discussed in detail in~\cite{Efron1981NonparametricEstimates} and~\cite{Efron1981TheJacknifeEstimate}.

\subsection{Bootstrap vs. Jackknife}\label{subsubsec:mylabel3}
Usually, it requires up to 200 bootstraps to yield acceptable bootstrap estimates; (in special situations like estimating the
uncertainty in classifier performance it may take up to thousands of bootstraps). Hence, this requires calculating the statistic
$\hat{\theta}$ the same number of times $B$, as well. In the case of the jackknife, it requires only $n$ calculations as shown in (\ref{eq66}).
If the sample size is smaller than the required number of bootstraps, the jackknife is more economical in terms of computational cost.

In terms of accuracy, the jackknife can be seen to be an approximation to the bootstrap when estimating the standard error of a
statistic~\citep[Ch. 20]{Efron1993AnIntroduction}. Thus, if the statistic is linear they almost give the same result; (the bootstrap
gives the jackknife estimate multiplied by $[(n-1)/n]^{1/2}$). A statistic $s(\mathbf{x})$ is said to be linear if:%
\begin{equation}
  s(\mathbf{x})=\mu+\frac{1}{n}\sum\limits_{i=1}^{n}{\alpha(x_{i})},
  \label{eq68}%
\end{equation}
where $\mu$ is a constant and $\alpha(\cdot)$ is a function. This also can be viewed as having one data point at a time in the argument
of the function $\alpha$. Similarly, the jackknife can be seen as an approximation to the bootstrap when estimating the bias. If the
statistic is quadratic, they almost agree except in a normalizing factor . A statistic $s(\mathbf{x})$ is quadratic if:%
\begin{equation}
  s(\mathbf{x})=\mu+\frac{1}{n}\sum\limits_{1\leq i\leq n}{\alpha(x_{i})+\frac{1}{n^{2}}\sum\limits_{1\leq i<j\leq n}{\beta(x_{i},x_{j})}}.\label{eq69}
\end{equation}
An in-depth treatment of the bootstrap and jackknife, and their relation to each other, in mathematical detail is provided by
\citet[Ch. 1-5]{Efron1982TheJackknife}.

If the statistic is not smooth the jackknife will fail. Informally speaking, a statistic is said to be smooth if a small change in the
data leads to a small change in the statistic. An example of a non-smooth statistic is the median. If the sample cases are ranked and
the median is calculated, it will not change when a sample case changes unless this sample case bypasses the median value. Using the
same argument, we can see that an example of a smooth statistic is the sample mean.

\subsection{Influence Function, Infinitesimal Jackknife, and Estimate of Variance}\label{subsubsec:influence}
The infinitesimal jackknife was introduced by~\cite{Jaeckel1972TheInfinitesimal}. The concept of the influence curve was introduced
later by~\cite{Hampel1974TheInfluence}. In the present context and for pedagogical purposes, the influence curve will be explained
before the infinitesimal jackknife, since the former can be understood as the basis for the latter.

Following~\cite{Hampel1974TheInfluence}, let $\R$ be the real line and $s$ be a real-valued functional defined on the
distribution $F$, which is defined on $\R$. The distribution $F$ can be perturbed by adding some probability measure (mass) on
a point $x$. This should be balanced by a decrement in $F$ elsewhere, resulting in a new probability distribution $G_{\varepsilon,x}$
defined by:%
\begin{equation}
  G_{\varepsilon,x}=(1-\varepsilon)F+\varepsilon\delta_{x},~x\in\R.\label{eq70}
\end{equation}
Then, the influence curve $IC_{s,F}(\cdot)$ is defined by:%
\begin{equation}
  IC_{s,F}(x)=\lim_{\varepsilon\rightarrow0^{+}}\frac{s\left(  \left(1-\varepsilon\right)  F+\varepsilon\delta_{x}\right)  -s\left(  F\right)}{\varepsilon}.\label{eq71}
\end{equation}
It should be noted that $F$ does not have to be a discrete distribution. A simple example of applying the influence curve concept is to
consider the expectation $s=\int{x~dF(x)}=\mu$. Substituting back in (\ref{eq71}) gives:%
\begin{equation}
  IC_{s,F}(x)=x-\mu.\label{eq72}
\end{equation}
The meaning of this formula is the following: the rate of change of the functional $s$ with the probability measure at a point $x$ is
$x-\mu$. This is how the point $x$ influences the functional $s$. The influence curve can be used to linearly approximate a functional
$s$, along with its variance, which is similar to taking up to only the first-order term in a Taylor series expansion
(Appendix~\ref{sec:influence-function}).

\bigskip

It is important to state here that $s$ should be a functional in $\hat{F}$ that is an approximation to $F$, as was initially assumed in
(\ref{eq71}). If for example the value of the statistic $s$ changes if every sample case $x_{i}$ is duplicated, i.e., repeated twice,
this is not a functional statistic. An example of a functional statistic is the biased version of the variance estimate
$\Sigma_{i}(x_{i}-\bar{x}_{i})^{2}/n$, while the unbiased version $\Sigma_{i}(x_{i}-\bar{x}_{i})^{2}/(n-1)$ is not a functional
statistic. Generally, any approximation $s(\hat{F})$ to the functional $s(F)$, by approximating $F$ by the MLE $\hat{F}$, obviously
will be functional. In such a case the statistic $s(\hat{F})$ is called the plug-in estimate of the functional $s(F)$. Moreover, the
influence function (IF) method for variance estimation is applicable only to those functional statistics whose derivative (\ref{eq76})
exists. If that derivative exists, the statistic is called a smooth statistic; i.e., a small change in the dataset leads a small change
in the statistic. For instance, although the median is a functional statistic in the sense that duplicating any sample case will result
in the same value of the median, it is not smooth as described at the end of Sec.~\ref{subsubsec:mylabel3}. A key reference for the
IF is~\cite{Hampel1986RobustStatistics}. Appendix~\ref{sec:influence-function} shows an interesting connection to the
jackknife estimate.


  \section{Nonparametric Methods for Estimating the Error Rate of a Classification Rule}\label{subsec:estimating}
The review provided in this section is a terse summary of the main work of
Efron~\citep{Efron1997ImprovementsOnCross,Efron1993AnIntroduction,Efron1983EstimatingTheError}. In the previous section the statistic,
or generally speaking the functional, was a function of just one dataset. For a non-fixed design, i.e., when the predictors of the
testing set do not have to be the same as the predictors of the training dataset, a slight clarification for the previous notations is
needed. The classification rule trained on the training dataset $\mathbf{t}$ will be denoted as $\eta_{\mathbf{t}}$. Any new
observation that does not belong to $\mathbf{t}$ will be denoted by $t_{0}=(x_0,y_0)$. Therefore, the classification loss is
given by $L(y_{0},\eta_{\mathbf{t}}(x_{0}))$. Any performance measure conditional on that training dataset will be similarly
subscripted. Thus, all the performance measures should be subscripted $\mathbf{t}$; and hence the risk and the error
rate~\eqref{Eqe1e2} should be denoted by $\RISK_{\mathbf{t}}$ and $\Err_{\mathbf{t}}$, respectively. In the sequel, for simplicity and
WLOG, the $0$-$1$ loss function will be used. In such a case the conditional error rate will be given by:%
\begin{equation}
  \Err_{\mathbf{t}}=\MEAN_{0F}L\left(  y_{0},\eta_{\mathbf{t}}\left(x_{0}\right)  \right),\quad \left(  x_{0},y_{0}\right)  \sim F.\label{eq83}
\end{equation}
The expectation $\MEAN_{0F}$ is subscripted so to emphasize that it is taken over the observations $t_{0}\notin\mathbf{t}$. If the
performance is measured in terms of the error rate and we are interested in the mean performance, not the conditional one, then it is
given by:%
\begin{equation}
  \Err=\MEAN_{\mathbf{t}}\Err_{\mathbf{t}}.\label{eq:MeanError}
\end{equation}
The expectation $\MEAN_{\mathbf{t}}$ is the expectation over the training dataset $\mathbf{t}$, which would be the same if we had written $\MEAN_{F}$; for
notation clarity the former is chosen.

Consider a classification rule $\eta_{\mathbf{t}}$ already trained on a training dataset $\mathbf{t}$. A natural next question is, given
that there is just a single dataset available, how to use this dataset in assessing the classifier performance as well? Said
differently, how should one estimate, using only the available dataset, the true classification performance of a classification rule in
predicting new observations; these observations are different from those on which the rule was trained. In this section, we will review
the principal methods in the literature for estimating both the true error rate~\eqref{eq83} and its mean~\eqref{eq:MeanError}, of a
classification rule.

\subsection{Apparent Error}\label{subsubsec:apparent}
The apparent error is the error of the fitted model when it is tested on the same training data. Of course it is downward biased with
respect to the true error rate since it results from testing on the same information used in training~\citep{Efron1986HowBiasedIs}. The
apparent error is defined by:%
\begin{subequations}
  \begin{align}
    \overline{\Err}_{\mathbf{t}}  &  =\MEAN_{\hat{F}}L(y,\eta_{\mathbf{t}}(x)),\quad (x,y)\in\mathbf{t}\label{eq82}\\
                                 &  =\frac{1}{n}\sum\limits_{i=1}^{n}\left[I_{\hat{h}_{\mathbf{t}}(x_{i}|\omega_{1})<th}+I_{\hat{h}_{\mathbf{t}}(x_{i}|\omega_{2})>th}\right].
  \end{align}
\end{subequations}
Overfitting a classifier to minimize the apparent error is not the goal. The goal is to minimize the true error rate (\ref{eq83}) or
its mean~\eqref{eq:MeanError}.

\subsection{Cross Validation (CV)}\label{subsubsec:cross}
The basic concept of CV, as a resampling approach, has been proposed in different articles since the mid-1930s. The concept simply
leans on splitting the data into two parts; the first part is used in design (or training) without any involvement of the second part.
Then the second part is used to test the designed procedure; this is to test how the designed procedure will behave for new datasets.
\cite{Stone1974CrossValidatory} is a key reference for CV that proposes different criteria for optimization.

CV can be used to assess the prediction error of a model or in model selection. The true error rate in (\ref{eq83}) is the expected
error rate for a classification rule if tested on the population, conditional on a particular training dataset $\mathbf{t}$. This
performance measure can be approximated by the leave-one-out CV (LOOCV) by:%
\begin{equation}
  \widehat{\Err}_{\mathbf{t}}^{_{cv1}}=\frac{1}{n}\sum\limits_{i=1}^{n}{L}\left({y_{i},\eta_{\mathbf{t}^{(i)}}(x_{i})}\right)  ,~~~(x_{i},y_{i})\in\mathbf{t}.\label{eq84}
\end{equation}
This is done by training the classification rule on the dataset $\mathbf{t}^{\left( i\right) }$ that does not include the case $t_{i}$;
then testing the trained rule on that omitted case. This proceeds in \textquotedblleft round-robin\textquotedblright\ fashion until all
cases have contributed one at a time to the error rate. There is a hidden assumption in this mechanism: the training dataset
$\mathbf{t}$ will not change very much by omitting a single case. Therefore, testing on the omitted observation one at a time accounts
for testing approximately the same trained rule on $n$ new cases, all different from each other and different from those the classifier
has been trained on. Besides this LOOCV, there are other versions named $K$-fold (or leave-$n/K$-out). In such versions the whole
dataset is split into $K$ roughly equal-sized subsets, each of which contains approximately $n/K$ observations. The classifier is
trained on $K-1$ subsets and tested on the left-out one; hence we have $K$ iterations. It is clear that the LOOCV is a special case of
the $K$-fold CV, where $K=n$.

It is of interest to assess this estimator to see whether it estimates the conditional true error
$\MEAN\left[ {\widehat{\Err}_{\mathbf{t}}^{_{cv1}}-\Err_{\mathbf{t}}}\right] ^{2}$, with small MSE, as was designed or not. Many
simulation results, e.g., \cite{Efron1983EstimatingTheError}, show that there is only a very weak correlation between the CV estimator
$\widehat{\Err}_{\mathbf{t}}^{_{cv1}}$ and the conditional true error rate $\Err_{\mathbf{t}}$. This issue is discussed in mathematical
detail in the excellent paper by~\cite{Zhang1995AssessingPrediction}. Those other estimators that are based on resampling as well, and
will be reviewed below, are shown to have this same attribute. This very interesting (and perhaps surprising) result means the
following: whether the estimator is designed to estimate the conditional performance or the mean performance it indeed estimates the
latter because of the weak correlation with the former.

\subsection{Bootstrap Methods for Error Rate Estimation}\label{subsubsec:mylabel4}
The prediction error in (\ref{eq83}) is a function of the training dataset $\mathbf{t}$ and the testing population $F$. Bootstrap
estimation can be implemented here by treating the empirical distribution $\hat{F}$ as an approximation to the actual population
distribution $F$. By replicating from that distribution one can simulate many training datasets $\mathbf{t}^{*b},~b=1,\ldots,B$. For
every replicated training dataset the classifier will be trained and then tested on the original dataset $\mathbf{t}$. This is the
simple bootstrap (SB) estimator approach~\citep[Sec. 17.6]{Efron1993AnIntroduction} that was defined formally by:%
\begin{equation}
  \widehat{\Err}_{\mathbf{t}}^{_{SB}}=\MEAN_{\ast}\sum\limits_{i=1}^{n}{L(y_{i},\eta_{\mathbf{t}^{\ast}}(x_{i}))/n,\quad \hat{F}\rightarrow\mathbf{t}^{\ast}}.\label{eq85}
\end{equation}
It should be noted that this estimator no longer estimates the true error rate (\ref{eq83}) because the expectation taken over the
bootstraps mimics an expectation taken over the population of trainers, i.e., it is not conditional on a particular training dataset.
Rather, the estimator (\ref{eq85}) estimates the expected performance of the classifier $\MEAN_{F}\Err_{\mathbf{t}}$. For a finite
number of bootstraps, the expectation (\ref{eq85}) can be approximated by:
\begin{equation}
  \widehat{\Err}_{\mathbf{t}}^{_{SB}}=\frac{1}{B}\sum\limits_{b=1}^{B}{\sum\limits_{i=1}^{n}{L}}\left(  {{y_{i},\eta_{\mathbf{t}^{\ast b}}(x_{i})}}\right) /n.\label{eq86}
\end{equation}

\subsubsection{Leave-One-Out Bootstrap (LOOB)}\label{para:leave}
The previous estimator is obviously biased since the original dataset $\mathbf{t}$ used for testing includes part of the training data
in every bootstrap replicate.~\cite{Efron1983EstimatingTheError} proposed that, after training the classifier on every bootstrap
replicate, it is tested on those cases in the set $\mathbf{t}$ that are not included in the training; this concept can be developed as
follows. Eq. (\ref{eq86}) can be rewritten by interchanging the order of the double summation to give:%
\begin{equation}
  \widehat{\Err}_{\mathbf{t}}^{_{SB}}=\frac{1}{n}\sum\limits_{i=1}^{n}{\sum\limits_{b=1}^{B}{L}}\left(  {{y_{i},\eta_{\mathbf{t}^{\ast b}}(x_{i})}}\right)\left/B\right..\label{eq87}
\end{equation}
This equation is formally identical to (\ref{eq86}) but it expresses a different mechanism for evaluating the same quantity. It says
that, for a given point, the average performance over the bootstrap replicates is calculated; then this performance is averaged over
all the $n$ cases. Now, if every case $t_{i}$ is tested only from those bootstraps that did not include it in the training, a slight
modification of the previous expression yields the leave-one-out bootstrap (LOOB) estimator:%
\begin{equation}
  \widehat{\Err}_{\mathbf{t}}^{_{\left(  1\right)  }}=\frac{1}{n}\sum\limits_{i=1}^{n}\left[  {\sum\limits_{b=1}^{B}{I_{i}^{b}L}}\left({{y_{i},\eta_{\mathbf{t}^{\ast b}}(x_{i})}}\right)  \left/ \sum\limits_{{b}^{\prime}=1}^{B}{I_{i}^{{b}^{\prime}}}\right. \right],\label{eq88}
\end{equation}
where the indicator function $I_{i}^{b}$ equals one when the case $t_{i}$ is not included in the training replicate $b$, and zero
otherwise. \cite{Efron1997ImprovementsOnCross} emphasized a critical point about the difference between this bootstrap estimator and
the LOOCV. The CV tests on a given sample case $t_{i}$, having been trained just once on the remaining dataset. By contrast, the LOOB
tests on a given sample case $t_{i}$ using a large number of classifiers that result from a large number of bootstrap replicates that
do not contain that sample. This results in a smoothed cross-validation-like estimator. We explained and elaborated on this smoothness
property in~\cite{Yousef2019LeisurelyLookVersionsVariants-arxiv}.

\subsubsection{The Refined Bootstrap (RB)}\label{para:mylabel2}
The SB and the LOOB, from their definitions, look like designed to estimate the mean true error rate~\eqref{eq:MeanError} of a
classifier. For estimating the true conditional error rate of a classifier, conditional on a particular training dataset,
\cite{Efron1983EstimatingTheError} proposed to correct for the downward biased estimator $\overline{\Err}_{\mathbf{t}}$. Since the true
error rate $\Err_{\mathbf{t}}$ can be written as $\overline{\Err}_{\mathbf{t}%
}+(\Err_{\mathbf{t}}-\overline{\Err}_{\mathbf{t}})$, then it can be approximated by
$\overline{\Err}_{\mathbf{t}}+\MEAN_{F}(\Err_{\mathbf{t}}-\overline {\Err}_{\mathbf{t}})$. The term
$(\Err_{\mathbf{t}}-\overline{\Err}_{\mathbf{t}})$ is called the optimism. The expectation of the optimism can be approximated over the
bootstrap population. Finally the refined bootstrap approach, as named in~\citet[Sec. 17.6]{Efron1993AnIntroduction}, gives the
estimator:%
\begin{equation}
  \widehat{\Err}_{\mathbf{t}}^{_{RB}}=\overline{\Err}_{\mathbf{t}}+\MEAN_{\ast}u\Err_{\mathbf{t}\ast}(\hat{F})-\overline{\Err}_{\mathbf{t}\ast}u,\label{eq89}
\end{equation}
where $\Err_{\mathbf{t}\ast}(\hat{F})$ represents the error rate obtained from training the classifier on all bootstrap replicates
$\mathbf{t}^{\ast}$ and testing on the empirical distribution $\hat{F}$. This can be approximated for a limited number of bootstraps
by:%
\begin{equation}
  \widehat{\Err}_{\mathbf{t}}^{_{RB}}=\overline{\Err}_{\mathbf{t}}+\frac{1}{B}\sum\limits_{b=1}^{B}\left[  {\sum\limits_{i=1}^{n}{L}}\left(  {{y_{i},\eta_{\mathbf{t}^{\ast b}}(x_{i})}}\right)  {/n-\sum\limits_{i=1}^{n}{L}}\left(  {{y_{ib}^{\ast},\eta_{\mathbf{t}^{\ast b}}(x_{ib}^{\ast})}}\right){/n}\right].\label{eq90}
\end{equation}

\subsubsection{The 0.632 Bootstrap}\label{para:mylabel3}
If the concept used in developing the LOOB estimator, i.e., testing on cases not included in training, is used again in estimating the
optimism described above, this gives the 0.632 bootstrap estimator. Since the probability of including a case $t_{i}$ in the bootstrap
$\mathbf{t}^{\ast b}$ is given by:%
\begin{align}
  \Pr(t_{i}  \in\mathbf{t}^{\ast b}) &=1-(1-1/n)^{n} \approx1-e^{-1}=0.632,\label{eq91}
\end{align}
the effective number of sample cases contributing to a bootstrap replicate is approximately 0.632 of the size of the training dataset.
\cite{Efron1983EstimatingTheError} introduced the concept of a \textit{distance} between a point and a sample in terms of a
probability. Having trained on a bootstrap replicate, testing on those cases in the original dataset not included in the bootstrap
replicate accounts for testing on a set far from the training one, i.e., the bootstrap replicate. This is because every sample case in
the testing set has zero probability of belonging to the training dataset, i.e., very distant from the training dataset. This is a reason for
why the LOOB is an upwardly biased estimator.~\cite{Efron1983EstimatingTheError} showed roughly that:%
\begin{equation}
  \MEAN_{F}\left[  {\Err_{\mathbf{t}}-\overline{\Err}_{\mathbf{t}}}\right] \approx0.632\,\MEAN_{F}\left[  {\widehat{\Err}_{\mathbf{t}}^{_{\left(  1\right) }}-\overline{\Err}_{\mathbf{t}}}\right].\label{eq92}
\end{equation}
Substituting back in (\ref{eq89}) gives the 0.632 estimator:%
\begin{equation}
  \widehat{\Err}_{\mathbf{t}}^{_{(.632)}}=.368\,\overline{\Err}_{\mathbf{t} }+.632\,\widehat{\Err}_{\mathbf{t}}^{_{\left(  1\right)  }}.\label{eq93}
\end{equation}
The proof of the above results can be found in~\citet{Efron1983EstimatingTheError} and~\citet[Sec. 6]{Efron1993AnIntroduction}.

The motivation behind this estimator as stated earlier is to correct for the downward biased apparent error by adding a piece of the
upward biased LOOB estimator. But an increase in variance should be expected as a result of adding this piece of the relatively
variable apparent error. Moreover, this new estimator is no longer smooth since the apparent error itself is unsmooth.

\subsubsection{The 0.632+ Bootstrap Estimator}\label{para:mylabel4}
The .632 estimator reduces the bias of the apparent error. But for over-trained classifiers, i.e., those whose apparent error tends to
be zero, the .632 estimator is still downward biased.~\cite{Breiman1984ClassificationAnd} provided the example of an overfitted rule,
like $1$NN where the apparent error is zero. If, however, the class labels are assigned randomly to the predictors the
true error rate will obviously be 0.5. But substituting in (\ref{eq93}) gives an estimate of $.632\times.5=.316$. To account for
this bias for such over-fitted classifiers,~\cite{Efron1997ImprovementsOnCross} defined the \textit{no-information error rate} $\gamma$
by:%
\begin{equation}
  \gamma=\MEAN_{0F_{ind}}  {L}\left(  {y_{0},\eta_{\mathbf{t}}(x_{0})}\right),\label{eq94}
\end{equation}
where $F_{ind}$ means that $x_{0}$ and $y_{0}$ are distributed marginally as $F$ but they are independent. Or said differently, the
label is assigned randomly to the predictor. Then for a training sample $\mathbf{t}$, $\gamma$ can be estimated by:%
\begin{equation}
  \hat{\gamma}= \frac{1}{{n^{2}}}\sum\limits_{i=1}^{n}{\sum\limits_{j=1}^{n}{L}}\left({{y_{i},\eta_{\mathbf{t}}(x_{j})}}\right).\label{eq95}
\end{equation}
This means that the $n$ predictors have been permuted with the $n$ responses to produce $n^{2}$ non-informative cases. In the special
case of binary classification, let $\hat{p}_{1}$ be the proportion of the response classified as belonging to class 1. Also, let
$\hat{q}_{1}$ be the proportion of the responses classified as belonging to class 1. Then (\ref{eq95}) reduces to:%
\begin{equation}
  \hat{\gamma}=\hat{p}_{1}(1-\hat{q}_{1})+(1-\hat{p}_{1})\hat{q}_{1}.\label{eq96}
\end{equation}
Also define the \textit{relative overfitting rate}:%
\begin{equation}
  \hat{R}=\frac{\widehat{\Err}_{\mathbf{t}}^{_{\left(  1\right)  }}-\overline{\Err}_{\mathbf{t}}}{\hat{\gamma}-\overline{\Err}_{\mathbf{t}}}.\label{eq97}
\end{equation}
\cite{Efron1997ImprovementsOnCross} showed that the bias of the .632 estimator for the case of over-fitted classifiers is alleviated by
using a renormalized version of that estimator:%
\begin{subequations}
  \begin{align}
    \widehat{\Err}_{\mathbf{t}}^{_{(.632+)}}  &  =(1-\hat{w})\overline{\Err}_{\mathbf{t}}+\hat{w}\widehat{\Err}_{\mathbf{t}}^{_{\left(  1\right)  }},\label{eq98}\\
    \hat{w}  &  =\frac{.632}{1-.368\hat{R}}.
  \end{align}
\end{subequations}
It is useful to express the .632+ estimator in terms of its predecessor, the .632 estimator. Combining (\ref{eq93}), (\ref{eq96}), and
(\ref{eq97}) then substituting in (\ref{eq98}) yields:%
\begin{equation}
  \widehat{\Err}_{\mathbf{t}}^{_{\left(  {.632+}\right)  }}=\widehat{\Err}_{\mathbf{t}}^{_{\left(  {.632}\right)  }}+(\widehat{\Err}_{\mathbf{t}}^{_{\left(  1\right)  }}-\overline{\Err}_{\mathbf{t}})\frac{.368\cdot.632\cdot\hat{R}}{1-.368\hat{R}}.\label{eq99}
\end{equation}
\cite{Efron1997ImprovementsOnCross} consider the possibility that $\hat{R}$ lies out of the region $\left[ {0,1}\right] $. This leads
to their proposal of defining:%
\begin{align}
  \widehat{\Err}_{\mathbf{t}}^{_{\left(  1\right)  }\prime}  &  =\min(\widehat{\Err}_{\mathbf{t}}^{_{\left(  1\right)  }},\hat{\gamma}),\label{eq100}\\
  {\hat{R}}^{\prime}  &  =\left\{  {%
                        \begin{array}[c]{lll}
                          (\widehat{\Err}_{\mathbf{t}}^{_{\left(  1\right)  }}-\overline{\Err}_{\mathbf{t}})/(\hat{\gamma}-\overline{\Err}_{\mathbf{t}}) &  & \overline{\Err}_{\mathbf{t}}<\widehat{\Err}_{\mathbf{t}}^{_{\left(  1\right)  }}<\gamma\\
                          0 &  & \text{otherwise}
                        \end{array}}\right.,
\end{align}
to obtain a modification to (\ref{eq99}) that finally becomes:%
\begin{equation}
  \widehat{\Err}_{\mathbf{t}}^{\left(  {.632+}\right)  }=\widehat{\Err}_{\mathbf{t}}^{\left(  {.632}\right)  }+(\widehat{\Err}_{\mathbf{t}}^{_{\left(1\right)  }\prime}-\overline{\Err}_{\mathbf{t}})\frac{.368\cdot.632\cdot{\hat{R}}^{\prime}}{1-.368{\hat{R}}^{\prime}}.\label{eq101}
\end{equation}

\subsection{Estimating the Standard Error of Error Rate Estimators}\label{subsec:mylabel3}
What have been reviewed above are several resampling methods: the CV, .632, and .632+ estimate the conditional error rate of a
classification rule, conditional on that training dataset; and the LOOB estimates the mean error rate, where the expectation is taken
over the population of training datasets. Regardless of what the estimator is designed to estimate, it is still a function of the
current dataset $\mathbf{t}$, i.e., it is a random variable. If, e.g., the LOOB estimator $\widehat{\Err}_{\mathbf{t}}^{_{\left(
      1\right) }}$ is considered, it estimates a constant real-valued parameter $\MEAN_{0F}\MEAN_{F}L(y_{0},\eta_{\mathbf{t}}(x_{0}))$
with expectation taken over all the trainers and then over all the testers, respectively; this is the overall mean error rate. Yet,
$\widehat{\Err}_{\mathbf{t}}^{_{\left( 1\right) }}$ is a random variable whose variability comes from the finite size of the available
dataset. If the classifier is trained and tested on a very large number of observations, this would approximate training and testing on
the entire population, and the variability would shrink to zero. This also applies for any performance measure other than the error
rate. So, we are interested now in estimating $\Var_{\mathbf{t}}\widehat{\Err}_{\mathbf{t}}^{_{\left(1\right) }}$, the variance of the
estimator, not estimating $\Var_{\mathbf{t}}\Err_{\mathbf{t}}$, the variance of the true performance.

\bigskip

The next question then is, having estimated the mean performance of a classifier: what is the associated uncertainty of this estimate.
Said differently: can an estimate of the variance of this estimator be obtained from the same training
dataset?~\cite{Efron1997ImprovementsOnCross} proposed the use of the IF method (Sec.~\ref{subsubsec:influence}), to estimate the
uncertainty (variability) in $\widehat{\Err}_{\mathbf{t}}^{_{\left( 1\right) }}$. The reader is alerted that estimators that
incorporate a piece of the apparent error are not suitable for the IF method. Such estimators are not smooth because the apparent error
itself is not smooth.

By recalling the definitions of Sec.~\ref{subsubsec:influence}, $\widehat{\Err}_{\mathbf{t}}^{_{\left( 1\right) }}$ is now the statistic
$s(\hat{F})$. To simplify notation, the error $L(y_{i},\eta_{\mathbf{t}^{\ast b}}(x_{i}))$ may be denoted by $L_{i}^{b}$, and define
the following notation:%
\begin{equation}
  l_{\cdot}^{b}=\frac{1}{n}\sum\limits_{i=1}^{n}{I_{i}^{b}L_{i}^{b}},\label{eq102}%
\end{equation}
Also, define $N_{i}^{b}$ to be the number of times the case $t_{i}$ is included in the bootstrap $b$. Then, it has been proven
in~\cite{Efron1995CrossValidation} that the IF of such an estimator is given by:%
\begin{equation}
  \left.  {\frac{\partial s(\hat{F}_{\varepsilon,i})}{\partial\varepsilon} }\right\vert _{\varepsilon=0}=(2+\frac{1}{n-1})(\hat{E}_{i}-\widehat {\Err}_{\mathbf{t}}^{_{\left(  1\right)  }})+\frac{n\sum\nolimits_{b=1} ^{B}{(N_{i}^{b}-\bar{N}_{i})I_{i}^{b}}}{\sum\nolimits_{b=1}^{B}{I_{i}^{b}}}.\label{eq103}
\end{equation}
Combining (\ref{eq81}) and (\ref{eq103}) gives an estimation to the uncertainty in $\widehat{\Err}_{\mathbf{t}}^{_{\left( 1\right) }}$.


  \section{Nonparametric Methods for Estimating the AUC of a Classification Rule}\label{sec:NonParamInfAUC}
In the present section, we extend the study carried out in~\cite{Efron1983EstimatingTheError,Efron1997ImprovementsOnCross}, and
summarized in Sec.~\ref{subsec:estimating}, to construct nonparametric estimators for the AUC (a two-sample statistic) analogue to
those of the error rate (a one-sample statistic). Although some previous experimental comparative
studies~\citep{Yousef2004ComparisonOf,Sahiner2001ResamplingSchemes,Sahiner2008ClassifierPerformance} were conducted to compare some of
these resampling-based AUC estimators, in particular the .632 versions, there was no theoretical justification of using these
estimators for the AUC. We provide here a full account of the different versions of bootstrap estimators reviewed in
Sec.~\ref{subsec:estimating} and show how they can be formally extended to estimate the AUC.

\subsection{Construction of Nonparametric Estimators for AUC}\label{sec:constr-nonp-estim}
Before switching to the AUC, some more elaboration on Sec.~\ref{subsec:estimating} is needed. The SB estimator~\eqref{eq85} can be
rewritten as:%
\begin{equation}
  \widehat{\Err}_{\mathbf{t}}^{_{SB}}=\MEAN_{\ast}\MEAN_{\widehat{F}}\left[  {L(\eta_{\mathbf{t}^{\ast} }(x),y)|\mathbf{t}^{\ast}} \right].
\end{equation}
Since there would be some observation overlap between $\mathbf{t\ }$ and $\mathbf{t}^{\ast}$, this approach suffers an obvious bias as
was introduced in that section. This was the motivation behind interchanging the expectations and defining the LOOB
(Sec.~\ref{para:leave}). Alternatively, we could have left the order of the expectation but with testing on only those observations in
$\mathbf{t}$ that do not appear in the bootstrap replication $\mathbf{t}^{\ast}$, i.e., the distribution $\hat {F}^{\left( \ast\right)
}$. The parenthesis notation $\left( \ast\right) $ refers to excluding from $\widehat{F}$, in the testing stage, the training cases
$\mathbf{t}^{*}$ that were generated from the bootstrap replication. We call the resulting estimator
$\widehat{\Err}_{\mathbf{t}}^{_{\left( \ast\right) }}$, which we define formally by:%
\begin{align}
  \widehat{\Err}_{\mathbf{t}}^{_{\left(  \ast\right)  }}  &  =\MEAN_{\ast}\MEAN_{\hat{F}^{\left( \ast\right)  }}\left[  {L(\eta_{\mathbf{t}^{\ast}}(x),y)|\mathbf{t}^{\ast} }\right]\label{Eq14PRL}
\end{align}
We can give the inner expectation the notation $\Err_{\mathbf{t}^{\ast b}}(\widehat{F}^{\left( \ast\right)  })$, and rewrite the estimator as:
\begin{subequations}\label{eq:Errstar}
  \begin{align}
    \widehat{\Err}_{\mathbf{t}}^{_{\left(  \ast\right)  }} &= \MEAN_{\ast}{\Err_{\mathbf{t}^{\ast b}}(\widehat{F}^{\left( \ast\right)  })}\\
                                             &  =\frac{1}{B}\sum\limits_{b=1}^{B}\left[  \sum\limits_{i=1}^{N}{I_{i}^{b}L(\eta_{\mathbf{t}^{\ast b}}(x_{i}),y_{i}) \left/ \sum\limits_{i'=1}^{N}{I_{i'}^b}\right.}\right],
  \end{align}
\end{subequations}
where the indicator $I_{i}^{b}$ equals one if the observation $t_{i}$ is excluded from the bootstrap replication $\mathbf{t}^{\ast b}$, and
equals zero otherwise. The inner expectation in (\ref{Eq14PRL}) is taken over those observations not included in the bootstrap
replication $\mathbf{t}^{\ast}$, whereas the outer expectation is taken over all the bootstrap replications.

\bigskip

Analogously to Sec.~\ref{subsec:estimating}, and to what has been introduced above, we can define several bootstrap estimators for
the AUC. The start is the SB estimate, which can be defined as:%
\begin{subequations}
  \begin{align}
    \widehat{\AUC}_{\mathbf{t}}^{_{SB}}  &  =\MEAN_{\ast}{\AUC_{\mathbf{t}^{\ast}}(\widehat{F})},\quad \widehat{F}\rightarrow\mathbf{t}^{\ast} \label{eq111}\\
                                      &  =\MEAN_{\ast}\left[  {\frac{1}{n_{1}n_{2}}\sum\limits_{j=1}^{n_{2}}{\sum\limits_{i=1}^{n_{1}}{\psi(\hat{h}_{\mathbf{t}^{\ast}}(x_{i}),\hat{h}_{\mathbf{t}^{\ast}}(x_{j}))}}}\right],\quad x_{i}\in\omega_{1},\ x_{j}\in\omega_{2}.
  \end{align}
\end{subequations}
This averages the Mann-Whitney statistic over the bootstraps, where $\AUC_{\mathbf{t}^{\ast}}(\widehat{F})$ refers to the AUC obtained
from training the classifier on the bootstrap replicate $\mathbf{t}^{\ast}$ and testing it on the empirical distribution $\widehat{F}$.
In the approach used here, the bootstrap replicate $\mathbf{t}^{\ast}$ preserves the ratio between $n_{1}$ and $n_{2}$, which is called
stratification. That is, the training sample $\mathbf{t}$ is treated as
$\mathbf{t}=\mathbf{t}_{1}\cup\mathbf{t}_{2},\ \mathbf{t}_{1}\in\omega_{1},\ \mathbf{t}_{2}\in\omega_{2}$; then $n_{1}$ cases are
replicated from the first-class sample and $n_{2}$ cases are replicated from the second-class sample to produce $\mathbf{t}_{1}^{\ast}$
and $\mathbf{t}_{2}^{\ast}$ respectively, where $\mathbf{t}^{\ast}=\mathbf{t}_{1}^{\ast}\cup\mathbf{t}_{2}^{\ast}$. This was not needed
when the performance measure was the error rate since it is a statistic that does not operate simultaneously on two different sets of
observations as the Mann-Whitney statistic does (in $U$-statistic theory~\citep{Randles1979IntroductionTo}, error rate and Mann-Whitney
are called one-sample and two-sample statistics respectively). The expectation (\ref{eq111}) is approximated by averaging over a finite
number of bootstrap:%
\begin{equation}
  \widehat{\AUC}_{\mathbf{t}}^{_{SB}}=\frac{1}{B}\sum\limits_{b=1}^{B}{\AUC_{\mathbf{t}^{\ast b}}(\widehat{F})},\label{eq112}
\end{equation}

\bigskip

The same motivation behind the estimator (\ref{eq88}) can be applied here, i.e., testing only on those cases in $\mathbf{t}$ that are
not included in the training dataset $\mathbf{t}^{\ast b}$, in order to reduce the bias. This can be carried out in (\ref{eq112}) without
interchanging the summation order. The new estimator is named $\widehat{\AUC}_{\mathbf{t}}^{_{(\ast)}}$, where the parenthesis notation
$\left( \ast\right) $ refers to the exclusion, in the testing stage, of the training cases that were generated from the bootstrap
replication. Formally, we define this as:%
\begin{subequations}\label{eq:AUCstar}
  \begin{align}
    \widehat{\AUC}_{\mathbf{t}}^{_{\left(  \ast\right)  }}  &  =\MEAN_{\ast}{\AUC_{\mathbf{t}^{\ast b}}(\widehat{F}^{\left( \ast\right)  })}\\
                                                         &  =\frac{1}{B}\sum\limits_{b=1}^{B}\left[  {\sum\limits_{j=1}^{n_{2}} {\sum\limits_{i=1}^{n_{1}}{\psi({{{\hat{h}_{\mathbf{t}^{\ast}}(x_{i}),\hat {h}_{\mathbf{t}^{\ast}}(x_{j}))}}}I_{i}^{b}I_{j}^{b}}}} \left/ \sum_{i^{\prime} =1}^{n_{1}}I_{i^{\prime}}^{b}\sum_{j^{\prime}=1}^{n_{2}}I_{j^{\prime}} ^{b} \right. \right].
  \end{align}
\end{subequations}
\bigskip

The RB and 0.632 estimators can be introduced here in the same way it was used for the true error rate (Sec.~\ref{para:mylabel3}) as:%
\begin{equation}
  \widehat{\AUC}_{\mathbf{t}}^{RB}=\overline{\AUC}_{\mathbf{t}}+\MEAN_{\ast} \left[\AUC_{\mathbf{t}\ast}(\widehat{F})-\overline{\AUC}_{\mathbf{t}\ast}\right].\label{eq114}
\end{equation}
Then, if testing is carried out on cases excluded from the bootstraps, analogously to the 0.632 estimator of the error rate, this gives
rise to the 0.632 estimator of the AUC:%
\begin{equation}
  \widehat{\AUC}_{\mathbf{t}}^{_{(.632)}}=.368\,\overline{\AUC}_{\mathbf{t} }+.632\,\widehat{\AUC}_{\mathbf{t}}^{_{\left(  \ast\right)  }}.\label{eq115}
\end{equation}
It should be noted that this estimator is designed to estimate the true AUC for a classifier trained on the dataset $\mathbf{t}$ (the
classifier performance conditional on the training dataset $\mathbf{t})$. This is on contrary to the estimator (\ref{eq:AUCstar}) that
estimates the mean performance of the classifier (this is the expectation over the training dataset population for the conditional
performance).

\bigskip

The 0.632+ estimator $\widehat{\AUC}_{\mathbf{t}}^{_{(.632+)}}$ develops from $\widehat{\AUC}_{\mathbf{t}}^{_{(.632)}}$ in the same way as
$\widehat {\Err}_{\mathbf{t}}^{_{(.632+)}}$ developed from $\widehat{\Err}_{\mathbf{t}}^{_{(.632)}}$ in Sec.~\ref{para:mylabel4}.
There are two modifications to the details. The first regards the \textit{no-information error rate} $\gamma$; it can be proven that
the \textit{no-information} AUC is given by $\gamma_{\AUC} = 0.5$ (Lemma~\ref{lem:NoInfo}). The second regards the definitions
(\ref{eq100}), which should be modified to accommodate for the AUC. The new definitions are given by:%
\begin{subequations}
  \begin{align}
    \widehat{\AUC}_{\mathbf{t}}^{_{\left(  {.632+}\right)  }} &=\widehat{\AUC}_{\mathbf{t}}^{_{\left(  {.632}\right)  }}+(\widehat{\AUC}_{\mathbf{t}}^{_{\left(  \ast\right)  }\prime}-\overline{\AUC}_{\mathbf{t}})\frac{.368\cdot.632\cdot{\hat{R}}^{\prime}}{1-.368{\hat{R}}^{\prime}},\label{eq18IEEE}\\
    \widehat{\AUC}_{\mathbf{t}}^{_{\left(  \ast\right)  }\prime} &=\max\bigl( \widehat{\AUC}_{\mathbf{t}}^{_{\left(  \ast\right)  }},\gamma_{\AUC}\bigr),\label{eq19IEEE}\\
    {\hat{R}}^{\prime}&=\left\{%
                       \begin{array}[c]{ccc}%
                         \frac{(\widehat{\AUC}_{\mathbf{t}}^{_{\left(  \ast\right)  }}-\overline{\AUC}_{\mathbf{t}})}{(\gamma_{\AUC}-\overline{\AUC}_{\mathbf{t}})} &  & \text{if}~\overline{\AUC}_{\mathbf{t}}>\widehat{\AUC}_{\mathbf{t}}^{\left(\ast\right)  }>\gamma_{\AUC}\\
                         0 &  & \text{otherwise}
                       \end{array}\right..
  \end{align}
\end{subequations}

\bigskip

To this end, we have constructed the AUC nonparametric estimators analogue to those of the error rate. Some of them, mainly the .632+
estimator, will have the least bias~\citep{Efron1997ImprovementsOnCross}. However, all of these estimators are not ``smooth'' and not
eligible for the variance estimation via, e.g., the IF method (Sec.s~\ref{subsubsec:influence} and \ref{subsec:mylabel3}). The only
estimator that may seem smooth, is the star versions $\widehat{\Err}_{\mathbf{t}}^{_{\left( \ast\right) }}$ and
$\widehat{\AUC}_{\mathbf{t}}^{_{\left( \ast\right) }}$. However, the inner components
${\Err_{\mathbf{t}^{\ast b}}(\widehat{F}^{\left( \ast\right) })}$ and ${\AUC_{\mathbf{t}^{\ast b}}(\widehat{F}^{\left( \ast\right) })}$
are unsmooth themselves, because the classifier is trained on just one dataset. Applying the influence function enforces distributing
the differential operator $\partial/\partial\varepsilon$, of the IF, over the summation to be encountered by these unsmooth components.

\subsection{The Leave-Pair-Out Boostrap (LPOB) $\widehat{\AUC}^{_{\left( {1,1}\right) }}$, Its
  Smoothness and Variance Estimation}\label{sec:need-leave-pair}
The above discussion suggests introducing an analogue to $\widehat {\Err}_{\mathbf{t}}^{_{\left( 1\right) }}$ for measuring the performance
in AUC. This estimator is motivated from (\ref{eq111}) the same way the estimator $\widehat{\Err}_{\mathbf{t}}^{_{\left( 1\right) }}$
was motivated from (\ref{eq87}). The SB estimator (\ref{eq111}) can be rewritten as:%
\begin{align}
  \widehat{\AUC}_{\mathbf{t}}^{SB}  &  =\frac{1}{n_{1}n_{2}}\sum\limits_{j=1}^{n_{2}}{\sum\limits_{i=1}^{n_{1}}{\MEAN_{\ast}  {\psi(\hat{h}_{\mathbf{t}^{\ast}}(x_{i}),\hat{h}_{\mathbf{t}^{\ast}}(x_{j}))}}}\label{eq124}\\
                                  &  =\frac{1}{n_{1}n_{2}}\sum\limits_{j=1}^{n_{2}}{\sum\limits_{i=1}^{n_{1}}{\sum\limits_{b=1}^{B}\left[  {\psi(\hat{h}_{\mathbf{t}^{\ast b}}(x_{i}),\hat{h}_{\mathbf{t}^{\ast b}}(x_{j})){\left/B\right.}}\right]}}.%
\end{align}
In words, the procedure is to select a pair (one observation from each class) and calculate for that pair the mean---over many
bootstrap replications and training---of the Mann-Whitney kernel. Then, average over all possible pairs. This procedure will be
optimistically biased because sometimes the testers will be the same as the trainers. To eliminate that bias, the inner bootstrap
expectation should be taken only over those bootstrap replications that do not include the pair $(t_{i},t_{j})$ in the training. Under
that constraint, the estimator (\ref{eq124}) becomes the leave-pair-out bootstrap (LPOB) estimator:%
\begin{subequations}\label{Eq18PRL}
  \begin{align}
    \widehat{\AUC}_{\mathbf{t}}^{_{\left(  {1,1}\right)  }} &=\frac{1}{n_{1}n_{2}}\sum \limits_{j=1}^{n_{2}}{\sum\limits_{i=1}^{n_{1}}{\widehat{\AUC}_{i,j}} },\\
    \widehat{\AUC}_{i,j}&=\sum\limits_{b=1}^{B}{I_{j}^{b}I_{i}^{b}\psi(\hat{h}_{\mathbf{t}^{\ast b}}(x_{i}),\hat{h}_{\mathbf{t}^{\ast b}}(x_{j})) \left/ \sum\limits_{{b}^{\prime}=1}^{B}{I_{j}^{{b}^{\prime}}I_{i}^{{b}^{\prime}}}\right. }.
  \end{align}
\end{subequations}
The two estimators $\widehat{\AUC}_{\mathbf{t}}^{_{\left( \ast\right) }}$ and $\widehat{\AUC}_{\mathbf{t}}^{_{\left( {1,1}\right) }}$ produce very similar results;
this is expected since they both estimate the same thing, i.e., the mean AUC. However, the inner component $\widehat{\AUC}_{i,j}$ of the
estimator $\widehat{\AUC}_{\mathbf{t}}^{_{\left( {1,1}\right) }}$ also enjoys the smoothness property of $\widehat{\Err}_{\mathbf{t}}^{_{\left( 1\right) }}$.

\subsection{Estimating the Standard Error of AUC Estimators}\label{sec_SDest}
The only smooth nonparametric estimator for the AUC so far is the LPOB estimator~\eqref{Eq18PRL}.~\cite{Yousef2005EstimatingThe}
discusses how to extend the approach of estimating the uncertainty in the error rate estimator using the IF method
(Sec.~\ref{subsec:mylabel3}) to estimate the uncertainty of this estimator, where interested readers may be referred to for all
mathematical details and experimental results that show that the IF method provides almost unbiased estimation for the
standard error of the LPOB estimator.


  \section{Illustrative Numerical Examples} \label{sec:illustr-exampl}
\subsection{Error Rate Estimation}\label{sec:comp-among-estim-1}
\begin{table}[t] \centering
  \begin{tabular}[c]{cc}\toprule
    \textbf{Estimator} & \textbf{Average RMS}\\\midrule
    \multicolumn{1}{l}{$\Err_{\mathbf{t}}$} & $0$\\
    \multicolumn{1}{l}{$\widehat{\Err}_{\mathbf{t}}^{_{\left(  1\right)  }}$} & $.083$\\
    \multicolumn{1}{l}{$\widehat{\Err}_{\mathbf{t}}^{_{(.632)}}$} & $.101$\\
    \multicolumn{1}{l}{$\widehat{\Err}_{\mathbf{t}}^{_{(.632+)}}$} & $.081$\\
    \multicolumn{1}{l}{$\overline{\Err}_{\mathbf{t}}$} & $.224$\\\bottomrule
  \end{tabular}
  \caption{Average of RMS error of each estimator over 24 experiments run by~\cite{Efron1997ImprovementsOnCross}. The estimator
    $\widehat{\Err}_{\mathbf{t}}^{\left( 1\right) }$ is the next to the estimator $\widehat{\Err}_{\mathbf{t}}^{\left( .632+\right) }$
    with only 2.5\% increase in RMS.}\label{tab2}
\end{table}
\cite{Efron1983EstimatingTheError} and~\cite{Efron1997ImprovementsOnCross} provide comparisons of their proposed estimators (discussed
in Section~\ref{subsec:estimating}). They ran many simulations considering a variety of classifiers and data distributions, as well as
real datasets. They assessed the estimators in terms of the RMS, the root of the experimental MSE:%
\begin{subequations}\label{eq104}
  \begin{align}
    \MSE  &  =\MEAN_{MC}(  {\widehat{\Err}_{\mathbf{t}}-\Err_{\mathbf{t}}})^{2}\\
          &  =\frac{1}{G}\sum\limits_{g=1}^{G}{(  {\widehat{\Err}_{\mathbf{t}_{g}}-\Err_{\mathbf{t}_{g}}})  ^{2}},
  \end{align}
\end{subequations}
where $\widehat{\Err}_{\mathbf{t}_{g}}$ is the estimator (any estimator) conditional on a training dataset $\mathbf{t}_{g}$, and
$\Err_{\mathbf{t}_{g}}$ is the true prediction error conditional on the same training dataset. The number of MC trials $G$ in their
experiments was 200. The following statement is quoted from~\cite{Efron1997ImprovementsOnCross}:
\begin{quotation}
  The results vary considerably from experiment to experiment, but in terms of RMS error the .632+ rule is an overall winner.
\end{quotation}
This conclusion was without stating the criterion for deciding the \textit{overall winner}. It was apparent from their results that the
$.632+$ rule is the winner in terms of the bias---as was designed for. We calculated the average of the RMS of every estimator across
all the 24 experiments they ran;~\tablename~\ref{tab2} displays these averages. The estimators $\widehat{\Err}_{\mathbf{t}}^{_{\left(
      1\right) }}$ and $\widehat{\Err}_{\mathbf{t}}^{_{(.632+)}}$ are quite comparable to each other with only 2.5\% increase in the
average RMS of the former. We will show below in Section~\ref{sec:comp-among-estim} that the AUC estimators exhibit the same behavior
but with magnified difference between the two estimators.

\subsection{AUC Estimation}\label{sec:comp-among-estim}
\begin{figure}[t]\centering
  \includegraphics[height=2.5in]{./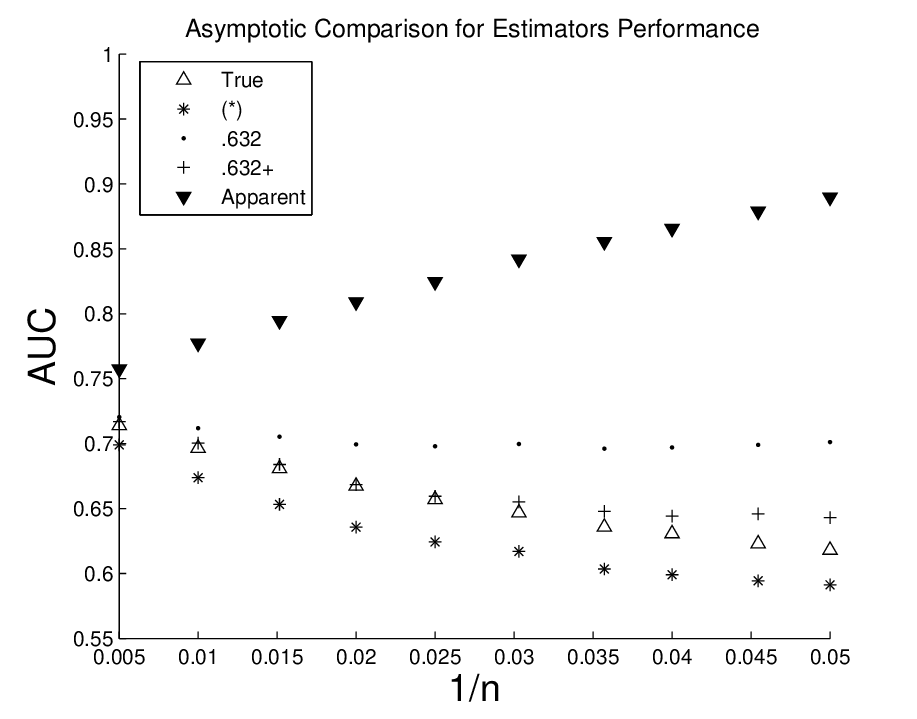}
  \caption{Comparison of the three bootstrap estimators, $\widehat{\AUC}_{\mathbf{t}}^{_{\left( \ast\right) }}$,
    $\widehat{\AUC}_{\mathbf{t}}^{_{\left( .632\right) }}$, and $\widehat {\AUC}_{\mathbf{t}}^{_{\left( .632+\right) }}$ for
    $5$-feature predictor. The $\widehat{\AUC}_{\mathbf{t}}^{_{\left( \ast\right) }}$ is downward biased, while the
    $\widehat{\AUC}_{\mathbf{t}}^{_{\left( .632\right) }}$ is an over correction for that bias. $\widehat{\AUC}_{\mathbf{t}}^{_{\left(
          .632+\right) }}$ is almost the unbiased version of the $\widehat{\AUC}_{\mathbf{t}}^{_{\left( .632\right) }}$. The figure
    first appeared in~\cite{Yousef2004ComparisonOf}.}\label{fig4IEEE}
\end{figure}

\begin{table}[t]\centering
  \resizebox{!}{3in}{    \begin{tabular}{ccccccc}
      \toprule
      \textbf{Estimator} & \textbf{Mean} & \textbf{SD} & \textbf{RMS} & \textbf{RMS$_{AM}$} & $\rho$ & \textbf{Size}\\
      \midrule
      \multicolumn{1}{l}{$\AUC_{\mathbf{t}}$} & 0.6181 & 0.0434 & 0 & 0.0434 & 1.0000 & \\
      \multicolumn{1}{l}{$\widehat{\AUC}_{\mathbf{t}}^{_{\left( \ast\right) }}$} & 0.5914 & 0.0947 & 0.0973 & 0.0984 & 0.2553 & \\
      \multicolumn{1}{l}{$\widehat{\AUC}_{\mathbf{t}}^{_{\left( .632\right) }}$} & 0.7012 & 0.0749 & 0.1128 & 0.1119 & 0.2559 & 20\\
      \multicolumn{1}{l}{$\widehat{\AUC}_{\mathbf{t}}^{_{\left( .632+\right)}}$} & 0.6431 & 0.0858 & 0.0906 & 0.0894 & 0.2218 & \\
      \multicolumn{1}{l}{$\overline{\AUC}_{\mathbf{t}}$} & 0.8897 & 0.0475 & 0.2774 & 0.2757 & 0.2231  & \\
      \midrule
      \multicolumn{1}{l}{$\AUC_{\mathbf{t}}$} & 0.6231 & 0.0410 & 0 & 0.0410 & 1.0000 & \\
      \multicolumn{1}{l}{$\widehat{\AUC}_{\mathbf{t}}^{_{\left( \ast\right) }}$} & 0.5945 & 0.0947 & 0.0956 & 0.0990 & 0.2993 & \\
      \multicolumn{1}{l}{$\widehat{\AUC}_{\mathbf{t}}^{_{\left( .632\right) }}$} & 0.6991 & 0.0763 & 0.1066 & 0.1077 & 0.3070 & 22\\
      \multicolumn{1}{l}{$\widehat{\AUC}_{\mathbf{t}}^{_{\left( .632+\right)}}$} & 0.6459 & 0.0846 & 0.0863 & 0.0876 & 0.2726 & \\
      \multicolumn{1}{l}{$\overline{\AUC}_{\mathbf{t}}$} & 0.8788 & 0.0499 & 0.2615 & 0.2606 & 0.2991& \\
      \midrule
      \multicolumn{1}{l}{$\AUC_{\mathbf{t}}$} & 0.6308 & 0.0400 & 0 & 0.0400 &1.0000 & \\
      \multicolumn{1}{l}{$\widehat{\AUC}_{\mathbf{t}}^{_{\left( \ast\right) }}$} & 0.5991 & 0.0865 & 0.0897 & 0.0922 & 0.2946 & \\
      \multicolumn{1}{l}{$\widehat{\AUC}_{\mathbf{t}}^{_{\left( .632\right) }}$} & 0.6971 & 0.0701 & 0.0961 & 0.0965 & 0.2997 & 25\\
      \multicolumn{1}{l}{$\widehat{\AUC}_{\mathbf{t}}^{_{\left( .632+\right)}}$} & 0.6442 & 0.0817 & 0.0815 & 0.0828 & 0.2758 & \\
      \multicolumn{1}{l}{$\overline{\AUC}_{\mathbf{t}}$} & 0.8656 & 0.0471 & 0.2406 & 0.2395 & 0.2833& \\
      \midrule
      \multicolumn{1}{l}{$\AUC_{\mathbf{t}}$} & 0.6359 & 0.0358 & 0 & 0.0358 &1.0000 & \\
      \multicolumn{1}{l}{$\widehat{\AUC}_{\mathbf{t}}^{_{\left( \ast\right) }}$} & 0.6035 & 0.0840 & 0.0874 & 0.0901 & 0.2904 & \\
      \multicolumn{1}{l}{$\widehat{\AUC}_{\mathbf{t}}^{_{\left( .632\right) }}$} & 0.6962 & 0.0688 & 0.0906 & 0.0915 & 0.2934 & 28\\
      \multicolumn{1}{l}{$\widehat{\AUC}_{\mathbf{t}}^{_{\left( .632+\right)}}$} & 0.6479 & 0.0792 & 0.0785 & 0.0802 & 0.2719 & \\
      \multicolumn{1}{l}{$\overline{\AUC}_{\mathbf{t}}$} & 0.8554 & 0.0472 & 0.2253 & 0.2246 & 0.2747& \\
      \midrule
      \multicolumn{1}{l}{$\AUC_{\mathbf{t}}$} & 0.6469 & 0.0343 & 0 & 0.0343 &1.0000 & \\
      \multicolumn{1}{l}{$\widehat{\AUC}_{\mathbf{t}}^{_{\left( \ast\right) }}$} & 0.6170 & 0.0750 & 0.0792 & 0.0807 & 0.2746 & \\
      \multicolumn{1}{l}{$\widehat{\AUC}_{\mathbf{t}}^{_{\left( .632\right) }}$} & 0.6997 & 0.0623 & 0.0818 & 0.0817 & 0.2722 & 33\\
      \multicolumn{1}{l}{$\widehat{\AUC}_{\mathbf{t}}^{_{\left( .632+\right)}}$} & 0.6553 & 0.0761 & 0.0752 & 0.0766 & 0.2656 & \\
      \multicolumn{1}{l}{$\overline{\AUC}_{\mathbf{t}}$} & 0.8419 & 0.0439 & 0.2010 & 0.1999 & 0.2434& \\
      \midrule
      \multicolumn{1}{l}{$\AUC_{\mathbf{t}}$} & 0.6571 & 0.0308 & 0 & 0.0308 &1.0000 & \\
      \multicolumn{1}{l}{$\widehat{\AUC}_{\mathbf{t}}^{_{\left( \ast\right) }}$} & 0.6244 & 0.0711 & 0.0753 & 0.0783 & 0.3185 & \\
      \multicolumn{1}{l}{$\widehat{\AUC}_{\mathbf{t}}^{_{\left( .632\right) }}$} & 0.6981 & 0.0598 & 0.0710 & 0.0725 & 0.3167 & 40\\
      \multicolumn{1}{l}{$\widehat{\AUC}_{\mathbf{t}}^{_{\left( .632+\right)}}$} & 0.6595 & 0.0739 & 0.0707 & 0.0739 & 0.3092 & \\
      \multicolumn{1}{l}{$\overline{\AUC}_{\mathbf{t}}$} & 0.8246 & 0.0431 & 0.1735 & 0.1730 & 0.2923& \\
      \midrule
      \multicolumn{1}{l}{$\AUC_{\mathbf{t}}$} & 0.6674 & 0.0271 & 0 & 0.0271 & 1.0000 & \\
      \multicolumn{1}{l}{$\widehat{\AUC}_{\mathbf{t}}^{_{\left( \ast\right) }}$} & 0.6357 & 0.0654 & 0.0690 & 0.0727 & 0.3534 & \\
      \multicolumn{1}{l}{$\widehat{\AUC}_{\mathbf{t}}^{_{\left( .632\right) }}$} & 0.6995 & 0.0556 & 0.0615 & 0.0642 & 0.3570 & 50\\
      \multicolumn{1}{l}{$\widehat{\AUC}_{\mathbf{t}}^{_{\left( .632+\right)}}$} & 0.6685 & 0.0690 & 0.0646 & 0.0690 & 0.3522 & \\
      \multicolumn{1}{l}{$\overline{\AUC}_{\mathbf{t}}$} & 0.8091 & 0.0406 & 0.1473 & 0.1474 & 0.3517& \\
      \midrule
      \multicolumn{1}{l}{$\AUC_{\mathbf{t}}$} & 0.6808 & 0.0217 & 0 & 0.0217 &1.0000 & \\
      \multicolumn{1}{l}{$\widehat{\AUC}_{\mathbf{t}}^{_{\left( \ast\right) }}$} & 0.6533 & 0.0546 & 0.0602 & 0.0611 & 0.2451 & \\
      \multicolumn{1}{l}{$\widehat{\AUC}_{\mathbf{t}}^{_{\left( .632\right) }}$} & 0.7053 & 0.0471 & 0.0527 & 0.0531 & 0.2488 & 66\\
      \multicolumn{1}{l}{$\widehat{\AUC}_{\mathbf{t}}^{_{\left( .632+\right)}}$} & 0.6840 & 0.0568 & 0.0556 & 0.0569 & 0.2477 & \\
      \multicolumn{1}{l}{$\overline{\AUC}_{\mathbf{t}}$} & 0.7946 & 0.0355 & 0.1195 & 0.1192 & 0.2499& \\
      \midrule
      \multicolumn{1}{l}{$\AUC_{\mathbf{t}}$} & 0.6965 & 0.0158 & 0 & 0.0158 & 1.0000 & \\
      \multicolumn{1}{l}{$\widehat{\AUC}_{\mathbf{t}}^{_{\left( \ast\right) }}$} & 0.6738 & 0.0454 & 0.0483 & 0.0507 & 0.3422 & \\
      \multicolumn{1}{l}{$\widehat{\AUC}_{\mathbf{t}}^{_{\left( .632\right) }}$} & 0.7119 & 0.0399 & 0.0405 & 0.0428 & 0.3492 & 100\\
      \multicolumn{1}{l}{$\widehat{\AUC}_{\mathbf{t}}^{_{\left( .632+\right)}}$} & 0.7004 & 0.0452 & 0.0426 & 0.0453 & 0.3448 & \\
      \multicolumn{1}{l}{$\overline{\AUC}_{\mathbf{t}}$} & 0.7772 & 0.0312 & 0.0860 & 0.0866 & 0.3596& \\
      \midrule
      \multicolumn{1}{l}{$\AUC_{\mathbf{t}}$} & 0.7141 & 0.0090 & 0 & 0.0090 &1.0000 & \\
      \multicolumn{1}{l}{$\widehat{\AUC}_{\mathbf{t}}^{_{\left( \ast\right) }}$} & 0.6991 & 0.0298 & 0.0327 & 0.0334 & 0.2288 & \\
      \multicolumn{1}{l}{$\widehat{\AUC}_{\mathbf{t}}^{_{\left( .632\right) }}$} & 0.7205 & 0.0272 & 0.0273 & 0.0279 & 0.2291 & 200\\
      \multicolumn{1}{l}{$\widehat{\AUC}_{\mathbf{t}}^{_{\left( .632+\right)}}$} & 0.7170 & 0.0285 & 0.0279 & 0.0286 & 0.2294 & \\
      \multicolumn{1}{l}{$\overline{\AUC}_{\mathbf{t}}$} & 0.7573 & 0.0228 & 0.0487 & 0.0489 & 0.2277 & \\
      \bottomrule
    \end{tabular}

}
  \caption{Comparison of the different bootstrap-based estimators of the $\AUC$. they are comparable to each other in the RMS sense,
    $\widehat{\AUC}_{\mathbf{t}}^{_{\left(.632+\right)}}$ is almost unbiased, and all are weakly correlated with the true conditional
    performance $\AUC_{\mathbf{t}}$.}\label{tab1}
\end{table}

\begin{table}[t] \centering
  \begin{tabular}[c]{cc}\toprule
    \textbf{Estimator} & \textbf{Average RMS}\\\midrule
    \multicolumn{1}{l}{$\AUC_{\mathbf{t}}$} & $0$\\
    \multicolumn{1}{l}{$\widehat{\AUC}_{\mathbf{t}}^{_{\left(  *\right)  }}$} & $.07347$\\
    \multicolumn{1}{l}{$\widehat{\AUC}_{\mathbf{t}}^{_{(.632)}}$} & $.07409$\\
    \multicolumn{1}{l}{$\widehat{\AUC}_{\mathbf{t}}^{_{(.632+)}}$} & $.06735$\\
    \multicolumn{1}{l}{$\overline{\AUC}_{\mathbf{t}}$} & $.17808$\\\bottomrule
  \end{tabular}
  \caption{Average of RMS error of each estimator over the 10 experiments displayed in~\tablename~\ref{tab1}. The estimator
    $\widehat{\AUC}_{\mathbf{t}}^{\left( *\right) }$ is the next to $\widehat{\AUC}_{\mathbf{t}}^{\left( .632+\right) }$ with only 9\%
    increase in RMS.}\label{tab:AUC-Est-Average}
\end{table}%
We carried out different experiments to compare the three bootstrap-based estimators $\widehat{\AUC}_{\mathbf{t}}^{_{\left( \ast\right)
  }}$, $\widehat{\AUC}_{\mathbf{t}}^{_{\left( .632\right) }}$, and $\widehat {\AUC}_{\mathbf{t}}^{_{\left( .632+\right) }}$ considering
different dimensionalities, different parameter values, and training set sizes. All experiments provided consistent and similar
results. Here, in this section, we illustrate the results when the dimensionality $p=5$, for multinormal 2-class data, with $\Sigma_1 =
\Sigma_2 = \mathbf{I}$, $\mu_1 = \mathbf{0}$, $\mu_2 = c \mathbf{1}$, and $c$ is an adjusting parameter to adjust the Mahalanobis
distance $\Delta=\left[ (\mu_{1}-\mu_{2})'\Sigma^{-1}(\mu_{1}-\mu_{2})\right]^{1/2} = c^{2}p$. We adjust $c$ to keep a reasonable
inter-class separation of $\Delta = 0.8$. When the classifier is trained, it will be tested on a pseudo-infinite test set, here 1000
cases per class, to obtain a very good approximation to the true AUC for the classifier trained on this very training dataset; this is
called a single realization or a Monte-Carlo (MC) trial. Many realizations of the training datasets with same $n$ are generated over MC
simulation to study the mean and variance of the AUC for the Bayes classifier under this training set size. The number of MC trials is
1000 and the number of bootstraps is 100. It is apparent from~\figurename~\ref{fig4IEEE} that the $\widehat
{\AUC}_{\mathbf{t}}^{_{\left( \ast\right) }}$ is downward biased. This is a natural opposite of the upward bias observed in
\cite{Efron1997ImprovementsOnCross} when the performance measure was the true error rate as a measure of incorrectness, by contrast
with the true AUC as a measure of correctness. The $\widehat{\AUC}_{\mathbf{t}}^{_{\left( {.632}\right) }}$ is designed as a correction
for $\widehat{\AUC}_{\mathbf{t}}^{_{\left( \ast\right) }}$; it appears in the figure to correct for that but with an over-shoot. The
correct adjustment for the remaining bias is almost achieved by the estimator $\widehat{\AUC}_{\mathbf{t}}^{_{\left( {.632+}\right)
  }}$. The $\widehat{\AUC}_{\mathbf{t}}^{_{\left( {.632}\right) }}$ estimator can be seen as an attempt to balance between the two
extreme biased estimators, $\widehat{\AUC}_{\mathbf{t}}^{_{\left( \ast\right) }}$ and $\overline {\AUC}_{\mathbf{t}}$. However, it is
expected that the component of $\overline {\AUC}_{\mathbf{t}}$ that is inherent in both $\widehat{\AUC}_{\mathbf{t}}^{_{\left(
      {.632+}\right) }}$ and $\widehat{\AUC}_{\mathbf{t}}^{_{\left( {.632}\right) }}$ increases the variance of these two estimators
that my compensate for the decrease in the bias. Therefore, we assess all estimators in terms of the RMS, the root of the MSE defined
in~\eqref{eq104}, and report the results in~\tablename~\ref{tab1}. In addition, we average the RMS of these estimators over the 10
experiments of~\tablename~\ref{tab1} and list the average in~\tablename~\ref{tab:AUC-Est-Average}. It is evident that the $.632+$ is
slightly the overall winner with only 9\% decrease in RMS if compared to the $\widehat{\AUC}_{\mathbf{t}}^{_{\left( \ast\right) }}$
estimator. This almost agrees with the same result obtained for the error rate estimators and reported in~\tablename~\ref{tab2}.

\bigskip

In addition to the RMS,~\tablename~\ref{tab1} compares the estimators in terms of the RMS around mean ($\RMS_{AM}$): the root of the
mean squared difference between an estimate and the mean performance (the mean over all possible training sets), instead of the
conditional performance (conditional on a particular training set). The motivation behind that is explained next. The estimators
$\widehat{\AUC}_{\mathbf{t}}^{_{\left(*\right) }}$ and $\widehat{\AUC}_{\mathbf{t}}^{_{\left(1,1\right) }}$ seem, at least from their
formalization, to estimate the mean AUC of the classifier (this is the analogue of $\widehat{\Err}_{\mathbf{t}}^{_{\left(*\right) }}$
and $\widehat{\Err}_{\mathbf{t}}^{_{\left(1\right) }}$). However, the basic motivation for the $\widehat{\AUC}_{\mathbf{t}}^{_{\left(
      .{632}\right) }}$ and $\widehat{\AUC}_{\mathbf{t}}^{_{\left( .{632+}\right) }}$ is to estimate the AUC conditional on the given
dataset $\mathbf{t}$ (this is the analogue of $\widehat{\Err}_{\mathbf{t}}^{_{\left( .{632}\right) }}$ and
$\widehat{\Err}_{\mathbf{t}}^{_{\left( .{632+}\right) }}$). Nevertheless, as mentioned in~\cite{Efron1997ImprovementsOnCross} and
detailed in~\cite{Zhang1995AssessingPrediction} the CV, the basic ingredient of the bootstrap based estimators, is weakly correlated
with the true performance on a sample by sample basis. This means that no estimator has a preference in estimating the conditional
performance. Section~\ref{sec:WeakCorr} elaborates more on this phenomenon.

\subsection{Components of Variance and Weak Correlation}\label{sec:WeakCorr}
\begin{figure}[t]\centering
  \includegraphics[height=2in]{./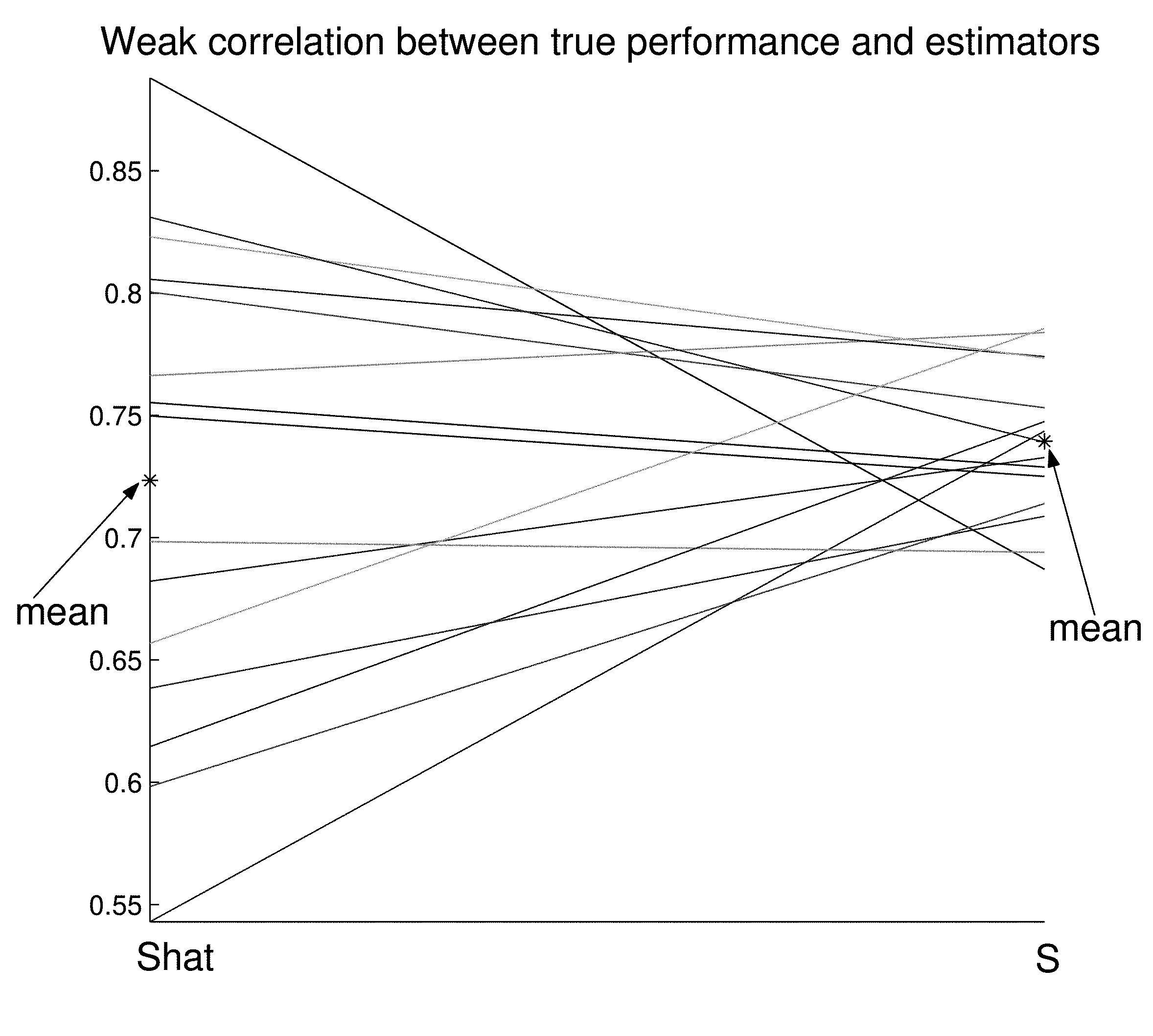}
  \caption{The lack of correlation (or the weak correlation) between the bootstrap-based estimators and the true conditional
    performance. Every line connects the true performance of the classifier trained on a data set $\mathbf{t}_{i}$ and the estimated
    value. The figure represents 15 trials of the 1000 MC trials. Two nearby values of true performance may correspond to two widely
    separated estimates on different sides of the mean.}\label{fig_WeakCorr}
\end{figure}
Many simulation results, e.g.,~\cite{Efron1983EstimatingTheError,Efron1997ImprovementsOnCross}, show that there is only a weak
correlation between the CV estimator and the conditional true error rate $\Err_{\mathbf{t}}$. This issue is discussed in mathematical
detail in the excellent paper by~\cite{Zhang1995AssessingPrediction}, which therefore concludes that the CV estimator should not be
used to estimate the true error rate of a classification rule conditional on a particular training data set. Other estimators discussed
in the present article have this same attribute, since they have the same resampling ingredient of the CV estimator and ``\textit{we
  would guess, for any other estimate of conditional prediction error}''~\citep[Sec. 7.12,][]{Hastie2009ElemStat}. We provide our
simple mathematical elaboration as follows. Denote the true performance of the classification rule conditional on the training set
$\mathbf{t}$ (whether $\Err_{\mathbf{t}}$, $\AUC_{\mathbf{t}}$, or any other performance measure) by $S_{\mathbf{t}}$, the
unconditional performance by $\MEAN_{\mathbf{t}}S_{\mathbf{t}}$, and an estimator of either of them by $\widehat{S}_{\mathbf{t}}$. For
easier notation we can unambiguously drop the subscript $\mathbf{t}$ and decompose the MSE as%
\begin{subequations}
  \begin{align}
    \MSE(\widehat{S}, S)  &  = \MEAN(\widehat{S} - S)^2 \\
                          & = \MEAN(\widehat{S} - \MEAN S)^2 + \Var(S) - 2\Cov(\widehat{S},S).
  \end{align}%
\end{subequations}
Then, by normalizing with the standard deviations we get:
\begin{align}
  \frac{\MSE(\widehat{S}, S)}{\sigma_S \sigma_{\widehat{S}}} &= \frac{\MSE(\widehat{S}, \MEAN S)}{\sigma_S \sigma_{\widehat{S}}} + \frac{\sigma_S}{\sigma_{\widehat{S}}} - 2\rho_{\widehat{S}S}.\label{eq:14}
\end{align}
This equation relates four crucial components to each other:
\begin{itemize}
\item $\MSE(\widehat{S}, S)\bigl/\sigma_S \sigma_{\widehat{S}}$, the normalized MSE of $\widehat{S}$, if we see it as an estimator of the
  conditional performance $S$.

\item $\MSE(\widehat{S}, \MEAN S)\bigl/\sigma_S \sigma_{\widehat{S}}$, the normalized MSE of $\widehat{S}$, if we see it as an estimator
  of the expected performance $\MEAN S$ (and therefore called MSE around the mean).

\item $\sigma_S \bigl/ \sigma_{\widehat{S}}$, the standard deviation ratio between $S$ and $\widehat{S}$.

\item $\rho_{\widehat{S}S}$, the correlation coefficient between $S$ and $\widehat{S}$.
\end{itemize}
From~\eqref{eq:14}, an estimator $\widehat{S}$ is a good candidate to estimate $S$ than $\MEAN S$ if its $\MSE(\widehat{S}, S)$ is less
than its $\MSE(\widehat{S}, \MEAN S)$. Then, it is the responsibility of the correlation coefficient $\rho_{\widehat{S}S}$ to be high
enough to cancel $\sigma_S \bigl/ \sigma_{\widehat{S}}$ and a portion of $\MSE(\widehat{S}, \MEAN S)$. Unfortunately, this is not the
case as we illustrate experimentally in~\tablename~\ref{tab1}, which provides all quantities of the decomposition~\eqref{eq:14}. It is
obvious from the values that $\RMS(\widehat{S}, S)$ and $\RMS(\widehat{S}, \MEAN S)$ are very close to each other because the quantity
$\sigma_S \bigl/ \sigma_{\widehat{S}} - 2\rho_{\widehat{S}S} \simeq 0.413 - 2 \times 0.290 = -0.167$ (on average over the 10
experiments shown in the table). Moreover, in some cases, e.g., the first experiment, it goes as low as $-0.052$. The correlation
between $\widehat{S}$ and $S$ is weak to cast $\widehat{S}$ as an estimate to $S$, although it is designed to estimate it! For more
illustration,~\figurename~\ref{fig_WeakCorr} visualizes the components in Eq.~\eqref{eq:14} and the numbers in~\tablename~\ref{tab1}.
This figure shows 15 realizations of the 1000 MC trials of the same experiment above. On the right, are the true values of $S$ when
trained on these different 15 training sets. On the left, are the corresponding 15 estimated values of $\widehat{S}$. The lines provide
links between the true values and the corresponding estimates. This figure shows that two nearby true values of $S$ are likely to have
two widely separated estimated values $\widehat{S}$ on different sides of the mean. This visually illustrates the lack of correlation
(or the weak correlation) between the estimators and the true conditional performanc.

\subsection{Two Competing Classifiers}\label{sec:two-comp-class}
\begin{table}[t]\centering
  \begin{tabular}{lccc}
    \toprule
    \textbf{Metric} $M$ & \textbf{LDA} & \textbf{QDA} & $\Delta$\\
    \midrule
    $\MEAN\  M_{\mathbf{t}}$ & $.7706$ & $.7163$ & $.0543$\\
    $\SD\ M_{\mathbf{t}}$ & $.0313$ & $.0442$ & $.0343$\\
    $\MEAN\  \widehat{M}^{(1,1)}$ & $.7437$ & $.6679$ & $.0758$\\
    \midrule
    $\SD\ \widehat{M}^{(1,1)}$ & $.0879$ & $.0944$ &$.0533$\\
    \midrule
    $\MEAN\ \widehat{\SD}\ {{\widehat{M}}^{(1,1)}}$ &$.0898$ & $.1003$ & $.0708$\\
    \midrule
    $\SD\ \widehat{\SD}\ \widehat{M}^{(1,1)}$ &$.0192$ & $.0163$ & $.0228$\\
    \bottomrule
  \end{tabular}
  \caption{Estimating the uncertainty in the estimator that estimates the difference in performance of two competing classifiers, the
    LDA and the QDA. The quantity $M$ represents $\AUC_1$ for LDA, $\AUC_2$ for QDA, and $\Delta$ for the
    difference.}\label{Fig_DifAUC}
\end{table}
If the assessment problem is how to compare two classifiers, rather than the individual performance, then the measure to be used is
either the conditional difference%
\begin{equation}
  \Delta_{\mathbf{t}}=\AUC_{1_{\mathbf{t}}}-\AUC_{2_{\mathbf{t}}},
\end{equation}
or the mean, unconditional, difference%
\begin{equation}
  \Delta=\operatorname*{E}\Delta_{\mathbf{t}}=\operatorname*{E}\left[ \AUC_{1_{\mathbf{t}}}-\AUC_{2_{\mathbf{t}}}\right],
\end{equation}
where, we defined them for the AUC just for illustration with immediate identical treatment for other measures. Then it is obvious that
there is nothing new in the estimation task, i.e., it is merely the difference of the performance estimate of each classifier, i.e.,%
\begin{equation}
  \widehat{\Delta}=\widehat{\operatorname*{E}\AUC_{1_{\mathbf{t}}}}-\widehat{\operatorname*{E}\AUC_{2_{\mathbf{t}}}}, \label{EQ_DifAUC}
\end{equation}
where each of the two estimators in (\ref{EQ_DifAUC}) is obtained by any estimator. A natural candidate, from the point of view of the
present chapter is the LPOB estimator $\widehat{\AUC}^{_{\left( 1,1\right) }}$---because of both the smoothness and weak correlation
issues discussed so far.

Then, how to estimate the uncertainty (variance) of $\widehat{\Delta}$. This is very similar to estimating the variance in
$\widehat{\operatorname*{E}\AUC_{\mathbf{t}}}$. There is nothing new in estimating $\operatorname*{Var}\widehat{\Delta}$. It is
obtained by replacing $\widehat{\AUC}^{_{\left( 1,1\right) }}$, in~\cite{Yousef2005EstimatingThe}, by the statistic $\widehat{\Delta}$
in (\ref{EQ_DifAUC}). For demonstration, typical values are given in~\tablename~\ref{Fig_DifAUC}, for comparing the linear and
quadratic discriminants, where the training set size per class is $20$ and number of features is $4$.


  \section{Discussion and Conclusion}\label{sec:conclusion}
In this chapter, the very important topic of the assessment of ML algorithms is reviewed, with an emphasis on the nonparametric
assessment of classification rules. The topic is quite important to many fields and applications, in particular cyberphysical security,
where ML algorithms are almost ubiquitous. We started with reviewing the basic nonparametric methods for estimating the bias and
variance of a statistic. Then, we reviewed the basic resampling-based methods for estimating the error rate of a classification rule.
Departing from that, we extended these estimators from estimating the error rate (a one-sample statistic) to estimating the AUC (a
two-sample statistic). This extension is theoretically justified, and not just an ad hoc application. Among these estimators, we
identified those that are smooth and eligible for estimating their standard error using the IF method.

\bigskip

It was interesting to see, through the whole chapter, the connection among different resampling-based estimators. It is worth
mentioning that, in addition to the conventional $K$-fold CV, there are other versions and variants, which are usually used in an ad
hoc way by many practitioners. The formalization of these versions and variants, and the mathematical connection among them, along with
their connection to the bootstrap-based estimators, all can be established in the same spirit and approach followed in the present
chapter. However, many of them are unsmooth except possibly the repeated CV, which is partially smooth and suitable for the IF
method~\citep{Yousef2019LeisurelyLookVersionsVariants-arxiv,Yousef2021EstimatingStandardErrorCross}.

\bigskip

With this rich variety of estimators, a practitioner may legitimately wonder about the ``optimal'' estimator (in terms of any
optimality criterion) that should be systematically used. There are three aspects, on which we can base our comparison: accuracy,
uncertainty estimation, and computational efficiency.

In terms of accuracy, it is surprising to know that, from the few number of comparative studies available in the literature, there is
no overall winner among these estimators. All of them have comparable accuracy, measured in terms of RMS, with a little superiority of
the .632+ bootstrap estimator. In addition, and most importantly, all estimators have a weak correlation with the true conditional
performance (e.g., $\Err_{\mathbf{t}}$, the conditional error rate, or $\AUC_{\mathbf{t}}$, the conditional AUC), a phenomenon that
allows them to be eligible only for estimating the mean true performance (e.g., $\MEAN_{\mathbf{t}}\Err_{\mathbf{t}}$ or
$\MEAN_{\mathbf{t}}\AUC_{\mathbf{t}}$), where the mean is taken over the population of training datasets as explained through the
chapter. Said differently, the performance estimation that a practitioner obtains using, e.g., the CV, is not an estimation of the
performance of this very trained ML algorithm; rather, it is an estimation of the mean performance of this algorithm had we trained it
on all possible training datasets of the same size! We quote from~\citep[Sec. 7.12]{Hastie2009ElemStat}:
\begin{quotation}
  ``\textit{This phenomenon also occurs for bootstrap estimates of error, and we would guess, for any other estimate of conditional
    prediction error}.''.
\end{quotation}

In terms of the variance estimation of these estimators (not the estimation of the variance of the algorithm itself), only a few of
them are smooth and candidates for a sophisticated method like the IF. The ordinary $K$-fold CV is not among those! Rather, only the
computationally expensive version of it, the repeated CV, is partially smooth as mentioned above.

It terms of the computational aspects, the bootstrap-based estimators are computationally expensive. If compared to the conventional
$K$-fold CV, which requires only $K$ iterations of both training and testing, the former require hundreds of bootstrap replications.
Because the majority of recent ML applications involve both massive datasets and complex algorithms, including DNN that is very
computationally expensive, it is obvious that the CV may be more practical than the bootstrap-based estimators. However, for some other
fields, e.g., cyberphysical security, many applications produce tabular (structured) data. Tabular data are more suitable for the
traditional and less computationally expensive ML algorithms. Therefore, serious practitioners in these fields and applications may
need to keep all of these estimators in their toolbox. Moreover, it is quite prudent to see a future benchmark that compiles these
estimators, along with different datasets from a wide range of applications, in a single comprehensive comparative study.


  \section{Acknowledgment}\label{sec:acknoledgment}
The author is grateful to the U.S. Food and Drug Administration (FDA) for funding a very early stage of this chapter, and to Dr. Kyle
Myers for her support. In his memorial, special thanks and gratitude to Dr. Robert F. Wagner, the supervisor and the teacher, or Bob
Wagner, the big brother and friend. He reviewed a very early version of this chapter before he passed away.


  \section{Appendix}\label{sec:appendix-1}
\subsection{Proofs}\label{sec:appendix}
\begin{lemma}\label{lem:perf-estim-from}The maximum likelihood estimation (MLE) for the probability mass function under nonparametric
  distribution, given a sample of $n$ observations, is given by:%
  \begin{equation}
    \hat{F}:\ \text{mass}~\frac{1}{n}~\text{on}\,~t_i,\quad i=1,\ldots,n.\label{eq52-01}
  \end{equation}
\end{lemma}
\begin{proof}The proof is carried out by maximizing the likelihood function $l(f)=\prod\limits_{i=1}^{n}{p_{i}}$, which can be rewritten
  under the constraint $\sum_{i}p_{i}=1$, using a Lagrange's multiplier, as:%
  \begin{equation}
    l(f)=\prod\limits_{i=1}^{n}{p_{i}}+\lambda\left(  \sum\limits_{i=1}^{n}{p_{i}}-1\right).\label{eq49}%
  \end{equation}
  The likelihood (\ref{eq49}) is maximized by taking the first derivative and setting it to zero to obtain:%
  \begin{equation}
    \frac{\partial l(f)}{\partial p_{j}}=\prod\limits_{i\neq j}{p_{i}}+\lambda\overset{set}{=}0,\quad j=1,\ldots,n.\label{eq50}%
  \end{equation}
  These $n$ equations along with the constraint $\sum_{i}p_{i}=1$ can be solved straightforwardly to give $\hat{p}_{i}=\frac{1}{n},\
  i=1,\ldots,n$, which completes the proof.\quad\qed
\end{proof}

\begin{lemma} The \textit{no-information} $\AUC$ is given by $\gamma_{\AUC} = 0.5$.\label{lem:NoInfo}
\end{lemma}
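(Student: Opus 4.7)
The plan is to mirror the definition of the no-information error rate $\gamma$ in~\eqref{eq94}, but using the AUC's defining probability~\eqref{eq:PopulationManWhit} in place of the $0$-$1$ loss. The key observation is that in the no-information regime the class label is assigned to a predictor independently of the predictor itself, so the two conditional score distributions collapse to a single common marginal.

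First I would set up notation. Let $X$ denote a predictor drawn from the marginal $F$, independent of its class label $Y$, and let $S=h_{\mathbf{t}}(X)$ be the score assigned by the trained classifier. Under the no-information model the random variables $(X\mid Y=\omega_1)$ and $(X\mid Y=\omega_2)$ have the same distribution as $X$, because independence of $X$ and $Y$ forces the conditional distributions to coincide with the marginal. Consequently the score under class $\omega_1$ and the score under class $\omega_2$ are i.i.d.\ copies of $S$; call them $S_1$ and $S_2$.

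Next I would plug this into the Mann--Whitney form~\eqref{eq:PopulationManWhit} of the AUC, which is the population analogue of~\eqref{eq58}. Taking the expectation of the kernel $\psi$ under independence and equality in distribution,
\begin{equation*}
  \gamma_{AUC}
  = \E\bigl[\psi(S_1,S_2)\bigr]
  = \Pr[S_1 > S_2] + \tfrac{1}{2}\Pr[S_1 = S_2].
\end{equation*}
By exchangeability of $(S_1,S_2)$, $\Pr[S_1>S_2]=\Pr[S_2>S_1]$, and since $\Pr[S_1>S_2]+\Pr[S_2>S_1]+\Pr[S_1=S_2]=1$, adding the half-tie term yields $\gamma_{AUC}=1/2$ regardless of whether the score distribution is continuous or has atoms. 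This is exactly the step where the tie-breaking convention $\psi(a,a)=1/2$ earns its keep, so I would flag it as the only subtle point: without the $1/2$ on ties one would only get $\gamma_{AUC}=1/2$ under continuity of $S$.

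Finally, for completeness I would note the parallel with~\eqref{eq95}--\eqref{eq96}: the empirical analogue, obtained by permuting the $n_1+n_2$ responses against the $n_1+n_2$ predictors and averaging the Mann--Whitney kernel over all cross-class pairs, evaluates to $\tfrac12$ exactly by the same symmetry argument applied to the empirical score pairs. Thus $\gamma_{AUC}=0.5$ is the constant that replaces the data-dependent $\hat\gamma$ of~\eqref{eq96} in the AUC version of the $.632+$ construction, which is what is needed in~\eqref{eq18IEEE}--\eqref{eq19IEEE}.
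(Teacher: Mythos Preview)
Your argument is correct, and it is a genuinely different route from the paper's. You work at the population level through the Mann--Whitney probability~\eqref{eq:PopulationManWhit}: independence of label and predictor forces the two class-conditional score laws to coincide, and then exchangeability of two i.i.d.\ scores gives $\E[\psi(S_1,S_2)]=1/2$ in one line, with the tie convention $\psi(a,a)=1/2$ handling atoms cleanly. The paper instead works through the ROC representation~\eqref{eq47} at the empirical level: it builds the no-information sample by crossing each observed score with every observed label (the $(n_1+n_2)^2$ permuted cases, parallel to~\eqref{eq95}), and then computes directly that for every threshold $TPF$ and $FPF$ agree, so the empirical ROC is the diagonal and its area is $1/2$. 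Your approach is shorter and makes the role of the tie-breaking explicit; the paper's approach stays closer to the empirical $.632+$ machinery and shows, as a by-product, that the no-information ROC is the $45^\circ$ line rather than deducing only its area. Either proof suffices for what is used in~\eqref{eq18IEEE}--\eqref{eq19IEEE}.
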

\begin{proof}
  $\gamma_{\AUC}$, an analogue to the \textit{no-information error rate} $\gamma$, is given by (\ref{eq47}) but with TPF and FPF given
  under the \textit{no-information} distribution $\MEAN_{0F}$ (see Sec.~\ref{para:mylabel4}). Therefore, assume that there are $n_{1}$
  and $n_2$ observations from class $\omega_{1}$ and $\omega_{2}$, respectively. Assume also for a fixed threshold $th$ the two
  quantities that define the error rate are $\TPF$ and $\FPF$. Also, assume that the sample observations are tested by the classifier
  and each sample has been assigned a decision value (score). Under the \textit{no-information} distribution, consider the following.
  For every decision value $h_{\mathbf{t}}(x_{i})$ assigned for the observation $t_{i}=(x_{i},y_{i})$, create new $n_{1}+n_{2}-1$
  observations; all of them have the same decision value $h_{\mathbf{t}}(x_{i})$, while their responses are equal to the responses of
  the rest $n_{1}+n_{2}-1$ observations $t_{j},\ j \neq i$. Under this new sample that consists of $(n_{1}+n_{2})^{2}$ observations, it
  is quite easy to see that the new TPF and FPF for the same threshold $th$ are given by
  $\FPF_{0\widehat{F},th}=\TPF_{0\widehat{F},th}= (\TPF\cdot n_{1}+\FPF\cdot n_{2})/(n_{1}+n_{2})$. This means that the ROC curve under
  the \textit{no-information} rate is a straight line with slope equal to one; this directly gives $\gamma_{\AUC} = 0.5$.
\end{proof}

\subsection{More On Influence Function (IF)}\label{sec:influence-function}
Assume that there is a distribution $G$ near to the distribution $F$; then under some regularity conditions~\citep[see, e.g.,][Ch.
2]{Huber1996RobustStatistical} a functional $s$ can be approximated as:%
\begin{equation}
  s(G)\approx s(F)+\int{IC_{s,F}(x)~dG(x)}.\label{eq73}%
\end{equation}
The residual error can be neglected since it is of a small order in probability. Some properties of (\ref{eq73}) are:%
\begin{equation}
  \int{IC_{T,F}(x)~dF(x)=0},\label{eq74}
\end{equation}
and the asymptotic variance of $s(F)$ under $F$, following from (\ref{eq74}), is given by:%
\begin{equation}
  \Var_{F}s(F)\simeq\int{\left[{IC_{T,F}(x)}\right]^{2}~dF(x)},\label{eq75}
\end{equation}
which can be considered as an approximation to the variance under a distribution $G$ near to $F$. Now, assume that the functional $s$
is a functional statistic in the dataset $\mathbf{x}=\{x_{i}:x_{i}\sim F,\ i=1,\ldots,n\}$. In that case the influence curve
(\ref{eq71}) is defined for each sample case $x_{i}$, under the true distribution $F$ as:%
\begin{equation}
  U_{i}(s,F)=\lim_{\varepsilon\rightarrow0}\frac{s(F_{\varepsilon,i})-s(F)}{\varepsilon}=\left.  {\frac{\partial s(F_{\varepsilon,i})}{\partial\varepsilon}}\right\vert _{\varepsilon=0},\label{eq76}
\end{equation}
where $F_{\varepsilon,i}$ is the distribution under the perturbation at observation $x_{i}$. Eq. (\ref{eq76}) is called the IF. If the
distribution $F$ is not known, the MLE $\hat{F}$ of the distribution $F$ is given by (\ref{eq52}), and as an approximation $\hat{F}$
may substitute for $F$ in (\ref{eq76}). The result may then be called the empirical IF \citep{Mallows1974OnSomeTopics}, or
infinitesimal jackknife~\citep{Jaeckel1972TheInfinitesimal}. In such an approximation, the perturbation defined in (\ref{eq70}) can be
rewritten as:%
\begin{equation}
  \hat{F}_{\varepsilon,i}=(1-\varepsilon)\hat{F}+\varepsilon\delta_{x_{i}},~~x_{i}\in\mathbf{x},~~i=1,\ldots,n.\label{eq77}
\end{equation}
This kind of perturbation is illustrated in~\figurename~\ref{fig6}.
\begin{figure}[t]\centering
  \includegraphics[width=0.75\textwidth]{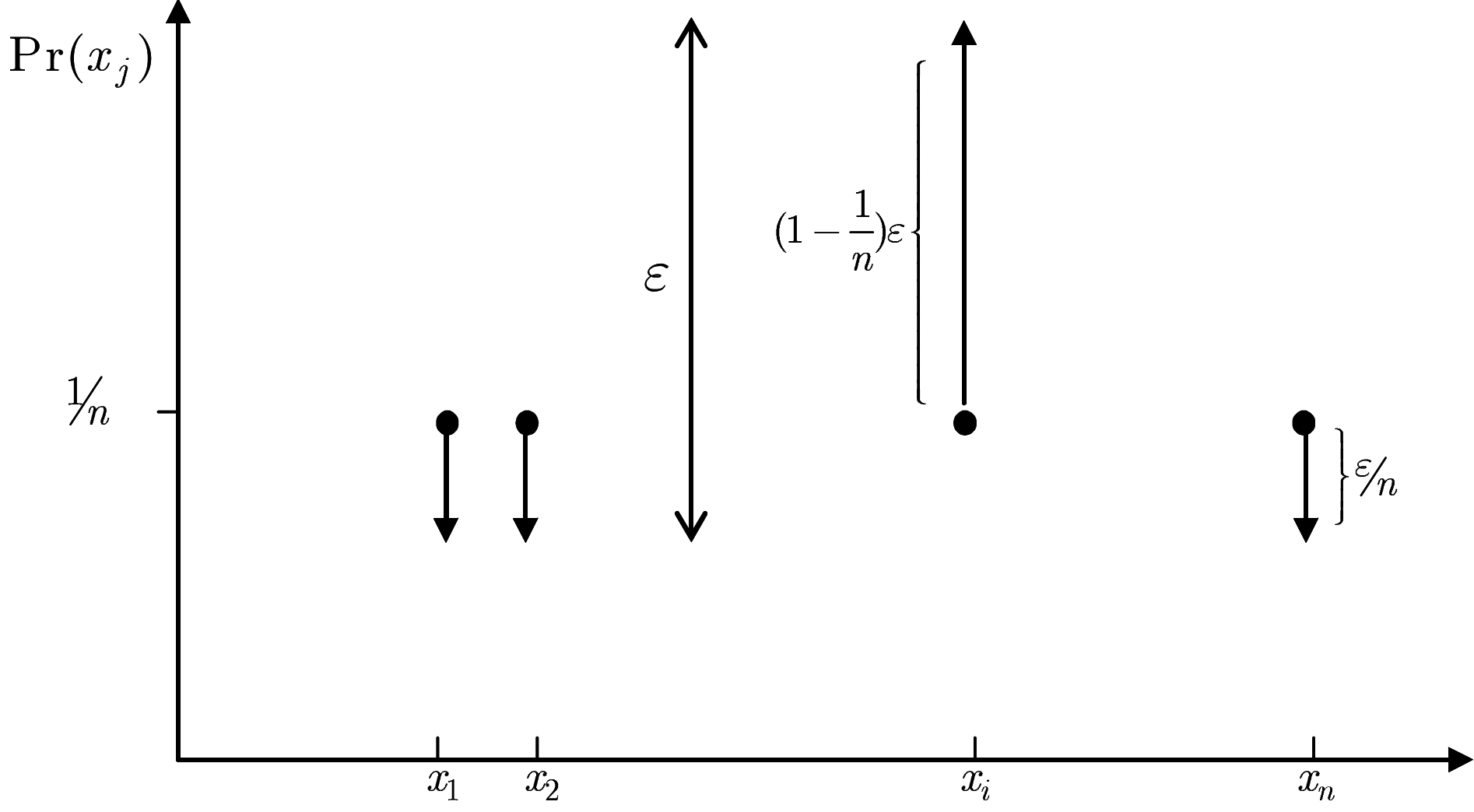}
  \caption{The new probability masses for the dataset $\mathbf{x}$ under a perturbation at sample case $x_{i}$ obtained by letting the
    new probability, at $x_{i}$ exceed the new probability at any other case $x_{i}$ by, $\varepsilon$.}\label{fig6}
\end{figure}
It will often be useful to write the probability mass function of (\ref{eq77}) as:%
\begin{equation}
  \hat{f}_{\varepsilon,i}(x_{j})=\left\{  {%
      \begin{array}[c]{lll}
        \frac{1-\varepsilon}{n}+\varepsilon &  & j=i\\
        \frac{1-\varepsilon}{n} &  & j\neq i
      \end{array}
    }\right..\label{eq78}%
\end{equation}
A very interesting case arises from (\ref{eq78}) if $-1/(n+1)$ is substituted for $\varepsilon$. In this case the new probability mass
assigned to the point $x_{j=i}$ in (\ref{eq78}) will be zero. This value of $\varepsilon$ simply generates the jackknife estimate
discussed in Sec.~\ref{subsubsec:jackknife}, where the whole observation is removed from the dataset.

Substituting $\hat{F}$ for $G$ in (\ref{eq73}) and combining the result with (\ref{eq76}) gives the IF approximation
for any functional statistic under the empirical distribution $\hat{F}$. The result is:%
\begin{subequations}
  \begin{align}
    s(\hat{F})  &  =s(F)+\frac{1}{n}\sum\limits_{i=1}^{n}{U_{i}}(s,F)+O_{p}(n^{-1})\label{eq79}\\
                &  \approx s(F)+\frac{1}{n}\sum\limits_{i=1}^{n}{U_{i}(s,F)}.
  \end{align}
\end{subequations}
The term $O_{p}(n^{-1})$ reads \textquotedblleft big-O of order $1/n$ in probability\textquotedblright. In general, $U_{n}=O_p(d_{n})$ if
$U_{n}/d_{n}$ is bounded in probability, i.e., $\Pr\{|U_{n}|/d_{n}<k_{\varepsilon }\}>1-\varepsilon$ $\forall$ $\varepsilon>0$. This
concept can be found in~\citet[Ch. 2]{BarndorffNielsen1989Asymptotic}. Then the asymptotic variance expressed in (\ref{eq75}) can be
given for $s(F)$ by:%
\begin{equation}
  \Var_{F} s =\frac{1}{n}\MEAN_{F} U^{2}(x_{i},F),\label{eq80}
\end{equation}
which can be approximated under the empirical distribution $\hat{F}$ to give the nonparametric estimate of the variance for a statistic
$s$ by:%
\begin{equation}
  \widehat{\Var}_{\hat{F}} s=\frac{1}{n^{2}}\sum\limits_{i=1}^{n}{U_{i}^{2}(x_{i},\hat{F})}.\label{eq81}
\end{equation}

\subsection{ML In Other Fields}\label{sec:some-hist-notes}
In this section we provide very brief miscellanea from other fields for the reader to see a bigger picture of this chapter. As already
was mentioned, ML is crucial to many applications. For example, in the medical imaging field, a tumor on a mammogram must be classified
as malignant or benign. This is an example of prediction, regardless of whether it is done by a radiologist or by a computer aided
detection (CAD) software. In either case, the prediction is done based on learning from previous mammograms. The features, i.e.,
predictors, in this case may be the size of the tumor, its density, various shape parameters, etc. The output, i.e., response, is
categorical and belongs to the set: $\mathcal{G} =\{benign,\ malignant\}$. There are so many such examples in biology and
medicine that it is almost a field unto itself, i.e., biostatistics. The task may be diagnostic as in the mammographic example, or
prognostic where, for example, one estimates the probability of occurrence of a second heart attack for a particular patient who has
had a previous one. All of these examples involve a prediction step based on previous learning. A wide range of commercial and military
applications arises in the field of satellite imaging. Predictors in this case can be measures from the image spectrum, while the
response can be the type of land, crop, or vegetation of which the image was taken.

\bigskip

Some expressions and terminology of ML belong to some fields and applications more than the others. E.g., it is conventional in medical
imaging to refer to $e_{1}$ as the false negative fraction (FNF), and $e_{2}$ as the false positive fraction (FPF). This is because
diseased patients typically have a higher output value for a test than non-diseased patients. For example, a patient belonging to class
1 whose test output value is less than the threshold setting for the test will be called ``test negative'', while the patient is in
fact in the diseased class. This is a false negative decision; hence the name FNF. The situation is reversed for the other error
component.

\bigskip

The importance of the AUC is natural and unquestionable in some applications than others. The equivalence of the area under the
empirical ROC and the Mann-Whitney-Wilcoxon statistic is the basis of its use in the assessment of diagnostic tests;
see~\cite{Hanley1982TheMeaning}.~\cite{Swets1986IndicesOfDiscrimination} has recommended it as a natural summary measure of detection
accuracy on the basis of signal-detection theory. Applications of this measure are widespread in the literature on both human diagnosis
and computer-aided diagnosis, in medical imaging \citep{Jiang1999ImprovingBreast}. In the field of machine
learning,~\cite{Bradley1997TheUseOfTheAreaUnder} has recommended it as the preferred summary measure of accuracy when a single number
is desired. These references also provide general background and access to the large literature on the subject.

\bigskip

Even the mistakes committed by some practitioners are obvious in some fields more than others. E.g., in DNA microarrays, these mistakes
are fatal and produce very fragile results. This is because of the very high dimensionality of the problem with respect to the amount
of available dataset. A more elaborate assessment phase should follow the design and construction phase in such ill-posed applications.


  \putbib[booksIhave,publications]

\end{bibunit}



\begin{thebibliography}{35}
\providecommand{\natexlab}[1]{#1}
\providecommand{\url}[1]{{#1}}
\providecommand{\urlprefix}{URL }
\expandafter\ifx\csname urlstyle\endcsname\relax
  \providecommand{\doi}[1]{DOI~\discretionary{}{}{}#1}\else
  \providecommand{\doi}{DOI~\discretionary{}{}{}\begingroup
  \urlstyle{rm}\Url}\fi
\providecommand{\eprint}[2][]{\url{#2}}

\bibitem[{Barndorff-Nielsen and Cox(1989)}]{BarndorffNielsen1989Asymptotic}
Barndorff-Nielsen OE, Cox DR (1989) {Asymptotic techniques for use in
  statistics}. Chapman and Hall, London; New York

\bibitem[{Bradley(1997)}]{Bradley1997TheUseOfTheAreaUnder}
Bradley AP (1997) {The Use of the Area Under the ROC Curve in the Evaluation of
  Machine Learning algorithms}. Pattern Recognition 30(7):1145

\bibitem[{Breiman et~al(1984)Breiman, Friedman, Olshen, and
  Stone}]{Breiman1984ClassificationAnd}
Breiman L, Friedman J, Olshen R, Stone C (1984) {Classification and regression
  trees}. Wadsworth International Group, Belmont, Calif.

\bibitem[{Chen et~al(2012)Chen, Gallas, and Yousef}]{Chen2012ClassVar}
Chen W, Gallas BD, Yousef WA (2012) {Classifier Variability: Accounting for
  Training and testing}. Pattern Recognition 45(7):2661--2671

\bibitem[{Efron(1979)}]{Efron1979BootstrapMethods}
Efron B (1979) {Bootstrap Methods: Another Look At the Jackknife}. The Annals
  of Statistics 7(1):1--26

\bibitem[{Efron(1981)}]{Efron1981NonparametricEstimates}
Efron B (1981) {Nonparametric Estimates of Standard Error: the Jackknife, the
  Bootstrap and Other Methods}. Biometrika 68(3):589--599

\bibitem[{Efron(1982)}]{Efron1982TheJackknife}
Efron B (1982) {The jackknife, the bootstrap, and other resampling plans}.
  Society for Industrial and Applied Mathematics, Philadelphia, Pa.

\bibitem[{Efron(1983)}]{Efron1983EstimatingTheError}
Efron B (1983) {Estimating the Error Rate of a Prediction Rule: Improvement on
  Cross-Validation}. Journal of the American Statistical Association
  78(382):316--331

\bibitem[{Efron(1986)}]{Efron1986HowBiasedIs}
Efron B (1986) {How Biased Is the Apparent Error Rate of a Prediction Rule?}
  Journal of the American Statistical Association 81(394):461--470

\bibitem[{Efron and Stein(1981)}]{Efron1981TheJacknifeEstimate}
Efron B, Stein C (1981) {The Jackknife Estimate of Variance}. The Annals of
  Statistics 9(3):586--596

\bibitem[{Efron and Tibshirani(1993)}]{Efron1993AnIntroduction}
Efron B, Tibshirani R (1993) {An introduction to the bootstrap}. Chapman and
  Hall, New York

\bibitem[{Efron and Tibshirani(1995)}]{Efron1995CrossValidation}
Efron B, Tibshirani R (1995) {Cross Validation and the Bootstrap: Estimating
  the Error Rate of a Prediction Rule}. Technical Report 176, Stanford
  University, Department of Statistics

\bibitem[{Efron and Tibshirani(1997)}]{Efron1997ImprovementsOnCross}
Efron B, Tibshirani R (1997) {Improvements on Cross-Validation: the $.632+$
  Bootstrap Method}. Journal of the American Statistical Association
  92(438):548--560

\bibitem[{Fukunaga(1990)}]{Fukunaga1990Introduction}
Fukunaga K (1990) {Introduction to statistical pattern recognition}, 2nd edn.
  Academic Press, Boston

\bibitem[{H\'{a}jek et~al(1999)H\'{a}jek, \v{S}id\'{a}k, and
  Sen}]{Hajek1999TheoryOfRank}
H\'{a}jek J, \v{S}id\'{a}k Z, Sen PK (1999) {Theory of rank tests}, 2nd edn.
  Academic Press, San Diego, Calif.

\bibitem[{Hampel(1974)}]{Hampel1974TheInfluence}
Hampel FR (1974) {The Influence Curve and Its Role in Robust Estimation}.
  Journal of the American Statistical Association 69(346):383--393

\bibitem[{Hampel(1986)}]{Hampel1986RobustStatistics}
Hampel FR (1986) {Robust statistics : the approach based on influence
  functions}. Wiley, New York

\bibitem[{Hanley(1989)}]{Hanley1989ROCMeth}
Hanley JA (1989) {Receiver Operating Characteristic (ROC) Methodology: the
  State of the art}. Critical Reviews in Diagnostic Imaging 29(3):307--335

\bibitem[{Hanley and McNeil(1982)}]{Hanley1982TheMeaning}
Hanley JA, McNeil BJ (1982) {The Meaning and Use of the Area Under a Receiver
  Operating Characteristic (ROC) curve}. Radiology 143(1):29--36

\bibitem[{Hastie et~al(2009)Hastie, Tibshirani, and
  Friedman}]{Hastie2009ElemStat}
Hastie T, Tibshirani R, Friedman JH (2009) {The elements of statistical
  learning: data mining, inference, and prediction}, 2nd edn. Springer, New
  York

\bibitem[{Huber(1996)}]{Huber1996RobustStatistical}
Huber PJ (1996) {Robust statistical procedures}, 2nd edn. Society for
  Industrial and Applied Mathematics, Philadelphia

\bibitem[{Jaeckel(1972)}]{Jaeckel1972TheInfinitesimal}
Jaeckel L (1972) {The Infinitesimal jackknife}. Memorandum, MM 72-1215-11, Bell
  Lab Murray Hill, NJ

\bibitem[{Jiang et~al(1999)Jiang, Nishikawa, Schmidt, Metz, Giger, and
  Doi}]{Jiang1999ImprovingBreast}
Jiang Y, Nishikawa RM, Schmidt RA, Metz CE, Giger ML, Doi K (1999) {Improving
  Breast Cancer Diagnosis With Computer-Aided diagnosis}. Academic Radiology
  6(1):22--33

\bibitem[{Mallows(1974)}]{Mallows1974OnSomeTopics}
Mallows C (1974) {On Some Topics in robustness}. Memorandum, MM 72-1215-11,
  Bell Lab Murray Hill, NJ

\bibitem[{Randles and Wolfe(1979)}]{Randles1979IntroductionTo}
Randles RH, Wolfe DA (1979) {Introduction to the theory of nonparametric
  statistics}. Wiley, New York

\bibitem[{Sahiner et~al(2001)Sahiner, Chan, Petrick, Hadjiiski, Paquerault, and
  Gurcan}]{Sahiner2001ResamplingSchemes}
Sahiner B, Chan HP, Petrick N, Hadjiiski L, Paquerault S, Gurcan MN (2001)
  {Resampling Schemes for Estimating the Accuracy of a Classifier Designed With
  a Limited Data Set}. Medical Image Perception Conference IX, Airlie
  Conference Center, Warrenton VA, 20-23

\bibitem[{Sahiner et~al(2008)Sahiner, Chan, and
  Hadjiiski}]{Sahiner2008ClassifierPerformance}
Sahiner B, Chan HP, Hadjiiski L (2008) {Classifier Performance Prediction for
  Computer-Aided Diagnosis Using a Limited dataset}. Medical Physics 35(4):1559

\bibitem[{Stone(1974)}]{Stone1974CrossValidatory}
Stone M (1974) {Cross-Validatory Choice and Assessment of Statistical
  Predictions}. Journal of the Royal Statistical Society Series B
  (Methodological) 36(2):111--147

\bibitem[{Swets(1986)}]{Swets1986IndicesOfDiscrimination}
Swets JA (1986) {Indices of Discrimination Or Diagnostic Accuracy: Their ROCs
  and Implied Models}. Psychological Bulletin 99:100--117

\bibitem[{Yousef(2019)}]{Yousef2019LeisurelyLookVersionsVariants-arxiv}
Yousef WA (2019) A leisurely look at versions and variants of the cross
  validation estimator. arXiv preprint arXiv:190713413

\bibitem[{Yousef(2021)}]{Yousef2021EstimatingStandardErrorCross}
Yousef WA (2021) Estimating the standard error of cross-validation-based
  estimators of classifier performance. Pattern Recognition Letters
  146:115--145

\bibitem[{Yousef et~al(2004)Yousef, Wagner, and Loew}]{Yousef2004ComparisonOf}
Yousef WA, Wagner RF, Loew MH (2004) {Comparison of Non-Parametric Methods for
  Assessing Classifier Performance in Terms of ROC Parameters}. In: Applied
  Imagery Pattern Recognition Workshop, 2004. Proceedings. 33rd; IEEE Computer
  Society, pp 190--195

\bibitem[{Yousef et~al(2005)Yousef, Wagner, and Loew}]{Yousef2005EstimatingThe}
Yousef WA, Wagner RF, Loew MH (2005) {Estimating the Uncertainty in the
  Estimated Mean Area Under the ROC Curve of a Classifier}. Pattern Recognition
  Letters 26(16):2600--2610

\bibitem[{Yousef et~al(2006)Yousef, Wagner, and Loew}]{Yousef2006AssessClass}
Yousef WA, Wagner RF, Loew MH (2006) {Assessing Classifiers From Two
  Independent Data Sets Using ROC Analysis: a Nonparametric Approach}. Pattern
  Analysis and Machine Intelligence, IEEE Transactions on 28(11):1809--1817

\bibitem[{Zhang(1995)}]{Zhang1995AssessingPrediction}
Zhang P (1995) {Assessing Prediction Error in Nonparametric Regression}.
  Scandinavian Journal Of Statistics 22(1):83--94

\end{thebibliography}
\end{document}